\definecolor{our_navy_blue}{RGB}{0, 110, 184}
\definecolor{purple}{RGB}{190, 0, 65}
\definecolor{LQY_color}{RGB}{50, 205, 50}
\newcommand{\revise}[1]{#1}
\theoremstyle{definition}
\newcommand{\expect}{\mathop{\mathbb E}}%
\theoremstyle{plain}
\newtheorem{theorem}{Theorem}[section]
\theoremstyle{definition}
\newtheorem{assumption}[theorem]{Assumption}
\theoremstyle{remark}
\title{
Learning from Active Human Involvement through Proxy Value Propagation
}
\author{
Zhenghao Peng$^\mathsection$, 
Wenjie Mo$^\mathsection$, 
Chenda Duan$^\mathsection$, 
Quanyi Li$^{\dagger}$, 
Bolei Zhou$^\mathsection$ \\
$^\mathsection$University of California, Los Angeles,
$^\dagger$University of Edinburgh
}
\begin{document}

\maketitle

\begin{abstract}
Learning from active human involvement enables the human subject to actively intervene and demonstrate to the AI agent during training. \revise{The interaction and corrective feedback from human brings safety and AI alignment to the learning process.} In this work, we propose a new reward-free active human involvement method called \textit{Proxy Value Propagation} for policy optimization. Our key insight is that a proxy value function can be designed to express human intents, wherein state-action pairs in the human demonstration are labeled with high values, while those agents' actions that are intervened receive low values. Through the TD-learning framework, labeled values of demonstrated state-action pairs are further propagated to other unlabeled data generated from agents' exploration. The proxy value function thus induces a policy that faithfully emulates human behaviors. Human-in-the-loop experiments show the generality and efficiency of our method. With minimal modification to existing reinforcement learning algorithms, our method can learn to solve continuous and discrete control tasks with various human control devices, including the challenging task of driving in Grand Theft Auto V. Demo video and code are available at: \url{https://metadriverse.github.io/pvp}.
\end{abstract}

\section{Introduction}


Reinforcement learning (RL) has been successfully applied in many domains, ranging from board game Go~\citep{silver2016mastering}, strategy game StarCraft II~\citep{samvelyan2019starcraft}, autonomous driving~\citep{kendall2019learning}, and even nuclear fusion~\citep{degrave2022magnetic}. Existing RL methods assume the manually designed reward functions can fully express human intents and preferences. However, the resulting agents might exhibit biased, misguided, or undesired behaviors due to faulty reward functions~\citep{leike2018scalable,russell2019human,krakovna2020specification}. Moreover, the poor sample efficiency as well as the safety concern due to the trial-and-error exploration prevent the real-world deployment of RL.

Human-in-the-loop methods are promising to achieve alignment, learning efficiency, and safety.
Human-in-the-loop policy learning relies on human subjects to oversee the learning process of the autonomous agents, thus it can better align the learned behaviors with the preferences of humans compared with handcrafted reward functions. 
%
Different forms of human involvement in human-in-the-loop policy learning have been studied over the years. Human subjects can advise actions upon the requests of the robots~\citep{mandel2017add} or provide preference-based feedback to assess the relative value of the collected trajectories~\citep{wirth2017survey,christiano2017deep,reddy2018shared, warnell2018deep, palan2019learning,guan2021widening,ouyang2022training}.
These methods learn from passive human involvement, where the human subjects do not provide real-time feedback and intervention during data collection.
For safety-critical tasks such as autonomous driving, safety is undoubtedly the first priority in human preference and the passive involvement methods yield unbounded risks in such settings.
An increasing body of works focuses on active human involvement, where human subjects actively intervene and provide demonstrations during the execution time~\citep{kelly2019hg,spencer2020learning,mandlekar2020human,li2021efficient}.
With online correction and demonstration from human subjects, AI alignment and training-time safety of the system can be substantially enhanced.

In this work, we focus on learning from active human involvement and develop a simple yet effective method that can turn a common value-based RL method into a reward-free human-in-the-loop method with minimal modification.
Our key insight is that we can learn a proxy value function from active human involvement, such that the proxy values encode human intents and guide policy learning to emulate human behaviors.
Specifically, we propose the \textit{Proxy Value Propagation (PVP)} method which labels high Q values to human actions and low Q values to agent actions that are intervened by the human subjects. The proxy values are then propagated to unlabeled state-action pairs in the agent's exploration through TD-learning. Value-based RL methods soon learn policies that align with human intents because of the value-maximization nature.
Experiments show that PVP can be successfully applied to both continuous and discrete action spaces, and achieve higher learning efficiency compared to baselines in various tasks, including driving in Grand Theft Auto V (GTA V). It is also compatible with different forms of human control devices, including gamepad, driving wheel, and keyboard.
We summarize our main contributions as follows:
\begin{enumerate}
[leftmargin=1em,topsep=0pt,label=\arabic*),itemsep=0em]
\item We propose a simple yet effective method, Proxy Value Propagation, that can be integrated into existing RL algorithms to learn from active human involvement. Our method is reward-free and can be generalized across various task settings and human control devices.
\item The experiments show that the proposed PVP method enables superior performance and high learning efficiency in various tasks from the MiniGrid, MetaDrive, CARLA, to GTA V environment. User study further shows that PVP achieves better performance and is more user-friendly compared to other human-in-the-loop baselines.
\end{enumerate}


\section{Related Work}

AI alignment is one of the major issues in learning trustworthy intelligent agents for real-world applications. It is difficult to represent various human preferences into a scalar reward function in existing Reinforcement Learning (RL) methods~\citep{russell2019human,dafoe2020open}. Meanwhile, the manually designed reward function, which might be misaligned with human preferences, often leads to undesired behaviors~\citep{leike2018scalable,krakovna2020specification}.
As a promising complement to RL, Human-in-the-loop Learning (HL) can overcome costly reward engineering and convey human intents to the learning process directly through human involvement.
Compared to imitation learning (IL)~\citep{ho2016generative,fu2018learning}, where the agent learns directly from high-quality human demonstration, HL methods benefit from interactive human involvement and feedback during the training, mitigating the possible distributional shift that usually happens when learning from offline data~\citep{ross2010efficient}.

\textbf{Preference-based RL.}
A large body of work focuses on learning human preference via ranking pair of trajectories generated by the learning agent~\citep{christiano2017deep,guan2021widening,reddy2018shared, warnell2018deep, sadigh2017active, palan2019learning,lee2021pebble,wang2021apple}.
InstructGPT~\citep{ouyang2022training} aligns language models by first supervised learning in demonstration and then finetuning by the reward learned from human preference feedback.
Preference learning can be applied to the tasks that human can not conduct, such as moving a six-legged Ant robot by assigning exact torque at each joint~\citep{christiano2017deep}.
For those tasks that human can demonstrate, these methods do not fully utilize real-time feedback from human subjects during agent-environment interaction.

\textbf{HL with Passive Human Involvement.}
Different from preference-based RL, human subjects can provide direct feedback to the learning agent during training through passive human involvement.
Some works learn policy from human-provided evaluative feedback, a Boolean flagging correct or wrong actions~
\citep{knox2012reinforcement,celemin2019interactive,najar2020interactively}. This is similar to the intervention in our framework. 
However, in \citep{najar2020interactively}, humans provide high-level instructions, e.g. pointing to the left/right, while in PVP humans provide intervention and low-level demonstrations.
The other line of work allows the neural policy to operate the robot and the human subjects can provide demonstration upon the requests from the learning agents~\citep{mandel2017add,menda2019ensembledagger,jonnavittula2021learning}.
The expert policy will intervene when uncertainty is huge, where the agent uncertainty is estimated by the variance of actions~\citep{menda2019ensembledagger}.
These methods reduce the cost of human resources but have potential risks to human subjects since they do not fully control the system.
For example, when human subjects use these algorithms to train autopilot AI, they are exposed to significant risks if they are in a self-driving cars due to unpredictable agent behaviors.

\textbf{Learning from Active Human Involvement.}
For safety-critical tasks such as autonomous driving, the safety of both the controlled vehicles and the human subjects is the top priority. 
There are many works that allow human subjects to proactively involve the agent-environment interactions based on their own judgment to ensure safety, which we call active human involvement.
Human subjects can terminate the episode if a near-accidental situation happens and such intervention policy can be learned~\citep{zhang2017query,abel2017agent,saunders2018trial,pakdamanian2021deeptake,xu2022look,wang2021appli}. 
Recent studies explore active human involvement methods through intervention and demonstration in the human-agent shared autonomy~\citep{macglashan2017interactive,menda2019ensembledagger,kelly2019hg,spencer2020learning,li2021efficient,jonnavittula2021learning,xu2022look}.
However, previous methods do not fully utilize the power of human involvement.
\revise{
COACH~\cite{macglashan2017interactive} treats human labels as indications of advantage instead of simply as reward. Compared to COACH, our method accepts not only the feedback (the intervention signal) but also the human demonstration. Our method does not consider the time delay of human subjects explicitly as COACH does.
}
Interactive imitation learning method (HG-DAgger)~\citep{kelly2019hg} does not leverage data collected by agents, while Intervention Weighted Regression (IWR)~\citep{mandlekar2020human} does not suppress undesired actions likely intervened by human.
Meanwhile, Expert Intervention Learning (EIL)~\citep{spencer2020learning} and IWR~\citep{mandlekar2020human} focus on optimizing actions step-wise without considering the temporal correlation between steps.
These drawbacks harm learning efficiency and thus incur more human involvement.
Moreover, previous methods lack experiments to demonstrate the generalizability to different task settings and human control devices.

\section{Problem Formulation}
\label{section:problem-formulation}

Policy learning aims at finding a policy to solve the sequential decision-making problem, which is usually modeled by a Markov decision process~(MDP). 
MDP is defined by the tuple $M=\left\langle \mathcal{S}, \mathcal{A}, \mathcal{P}, r, \gamma, d_{0}\right\rangle$ consisting of a state space $\mathcal{S}$, an action space $\mathcal{A}$, a state transition function $\mathcal{P}:\mathcal{S}\times\mathcal{A}\to\mathcal{S}$, a reward function $r: \mathcal{S}\times\mathcal{A}\to[R_{\min}, R_{\max}]$, a discount factor $\gamma\in(0,1)$, and an initial state distribution $d_0:\mathcal{S}\to[0,1]$.
The goal of conventional reinforcement learning is to learn a \textit{novice policy} $\pi_n(a | s): \mathcal{S}\times\mathcal{A}\to[0,1] $ that can maximize the expected cumulative return:
$
\pi_n = \arg\max_{\pi_n} \expect_{\tau\sim P_{\pi_n}}[
\sum_{t=0}^{T} \gamma^{t} r(s_t, a_t)],
$
wherein $\tau = (s_0, a_0, ..., s_T, a_T)$ is the trajectory sampled from trajectory distribution $P_{\pi_n}$ induced by $\pi_n$, $d_0$ and $\mathcal P$.
Here $\pi_n$ defines a stochastic policy, while deterministic policy can be denoted as $\mu_n(s): \mathcal S\to \mathcal A$ and its action distribution is a Dirac delta distribution $\pi_n(a|s) = \delta(a - \mu_n(s))$.

The reward function imposes an assumption that the reward can fully reflect the intentions of the users and incentivize desired behaviors.
However, this assumption may not always hold and the learned agent may obtain biased behaviors or figure out the loophole to finish the task~\citep{leike2018scalable,russell2019human}.
Revisiting the primal goal when developing learning systems, we find the reward is not a necessity since what we really want to achieve is the realization of human preference in the learned behaviors and, as suggested by~\cite{russell2019human}, the ultimate source of information about human preferences are human behaviors. 

Imitation Learning (IL) methods directly learn $\pi_n$ from human behaviors. Assuming a human expert has a \textit{human policy} $\pi_h(a_h|s): \mathcal{S}\times\mathcal{A}\to[0,1] $, which outputs human action $a_h \in \mathcal A$. Note that human action shares the same action space as novice action.
IL learns from the trajectories generated by human policy $\tau_h \sim P_{\pi_h}$ and optimizes the novice policy to close the gap between $\tau_n~\sim P_{\pi_n}$ and $\tau_h$.
Instead of generating an offline dataset 
and training novice policy against it~\cite{ho2016generative,fu2018learning}, we can incorporate a human subject into the loop of training for providing online data. This can mitigate the distributional shift since the data generated with human-in-the-loop has closer state distribution to that of the novice policy~\cite{ross2010efficient}.
This can be modeled by introducing an \textit{intervention policy} $I(\cdot|s, a_n)$ to describe human subjects' intervention behaviors.
In earlier methods such as DAgger~\citep{ross2010efficient}, the intervention policy is a Bernoulli distribution and the control authority switches back and forth between the novice and the expert. It is unrealistic to invite a real human subject to be involved in such training.
Later studies allow the human subjects to intervene and take full control~\citep{wang2018intervention,saunders2018trial,li2021efficient,xu2022look}, which we call such setting as \textit{learning from active human involvement}.
During training, a human subject accompanies the novice policy and can intervene with the agent by taking over the control to demonstrate desired behaviors.
The intervention policy can be considered as a deterministic policy denoted by $I(s, a_n): \mathcal S \times \mathcal A \to \{0, 1\}$ where $a_n \sim \pi_n(\cdot|s)$ is agent's action.
With notations above, the \textit{behavior policy} $\pi_b$ that generates actions during training is:
\begin{equation}
  \pi_b(a|s) 
  = 
  (1 - I(s, \mu_n(s)))\delta(a - \mu_n(s))
  + 
  I(s, \mu_n(s)){\pi_h}(a|s).
\end{equation}

With such a model of active human involvement, we can now formulate our objectives. 

\textbf{Task-specified metrics.}
Our primal goal is to find novice agents whose behaviors are well-aligned with human preferences.
In this work, we inform the human subjects of the primal goal of the tasks, \textit{e.g.} navigating to the destination in driving tasks. They are also aware of how task-specified metrics, such as success rate and route completion provided by the test environments,  are computed. These metrics serve as a proxy for human preferences in evaluating trained agents' performance. Unlike prior work where these metrics were used as rewards, our learning agent cannot access them. The only supervision sources in our method are human interventions, $I(s, a)$, and demonstrations, $a_h \sim \pi_h(\cdot|s)$.

\textbf{Preference Alignment.}
In our method, humans can intervene at any time. Most interventions occur in near-accidental situations or when agents are performing poorly. Conversely, lack of intervention indicates alignment with human preferences. Hence, another goal is to develop a novice policy that minimizes human interventions during shared control. 
In the next section, we will discuss our insights and how we build a concise, general, and efficient learning method to achieve these objectives.

\section{Method}

We propose the \textit{Proxy Value Propagation (PVP)} method which can transform a value-based RL method into an efficient reward-free human-in-the-loop policy optimization method that learns from active human involvement.
PVP is compatible with various task settings, such as continuous and discrete action spaces, as well as various human control devices. 
In this section, we first summarize the basic workflow of value-based RL before introducing the motivation and the design of PVP. We then describe the implementation details.

\textbf{Value-based RL:}
\revise{
The proposed human-in-the-loop method results from the minimum modification of existing reinforcement learning methods. Thus, we briefly introduce the background of related methods.}
Value-based RL optimizes the value function and policy iteratively.
On the value function side, we denote the state-action value and state value of policy $\pi$ as 
$Q(s,a)=\mathbb{E} \left[\sum_{t=0}^{\infty} \gamma^{t} r\left(s_{t}, a_{t}\right)\right]$
and $V(s)=\mathbb{E}_{a\sim\pi(\cdot|s)}Q(s,a)$, respectively. 
A neural network is commonly used to estimate the value function with Bellman backup:
$Q(s, a) \gets r(s, a) + \gamma \max_{a'} Q(s', a')$,
where $s'$ is the next state. 
To learn the value network $Q_\theta$ parameterized by $\theta$, stochastic gradient descent on the temporal difference (TD) loss is conducted
$
    J^\text{TD}(\theta) = \expect_{(s, a, s')} | Q_\theta(s, a) - (r(s, a) + \gamma \max_{a'} Q_{\hat{\theta}}(s', a')) |^2,
$
where $Q_{\hat{\theta}}$ can be a delay-updated target network.
In this work, we adopt the TD learning in the \textbf{reward-free} setting. Remove the reward in the TD loss, the TD loss becomes:
\begin{equation}
\label{eq:td-learning}
    J^\text{TD}(\theta) = \expect_{(s, a, s')} | Q_\theta(s, a) -  \gamma \max_{a'} Q_{\hat{\theta}}(s', a') |^2.
\end{equation}
On the policy side, based on the learned value function, the deterministic policy $\mu_{n}$ parameterized by $\phi$ can be learned by maximizing the Q values:
$J(\phi) = \expect_{s} Q(s, \mu_{n}(s; \phi))$.
The optimal policy is expected to maximize Q values:
\begin{equation}
\label{equation:expected-novice-policy}
\mu_n(s) = \arg\max_{a} Q(s, a).
\end{equation}

\begin{figure}[!t ]
\centering
\includegraphics[width=\linewidth]{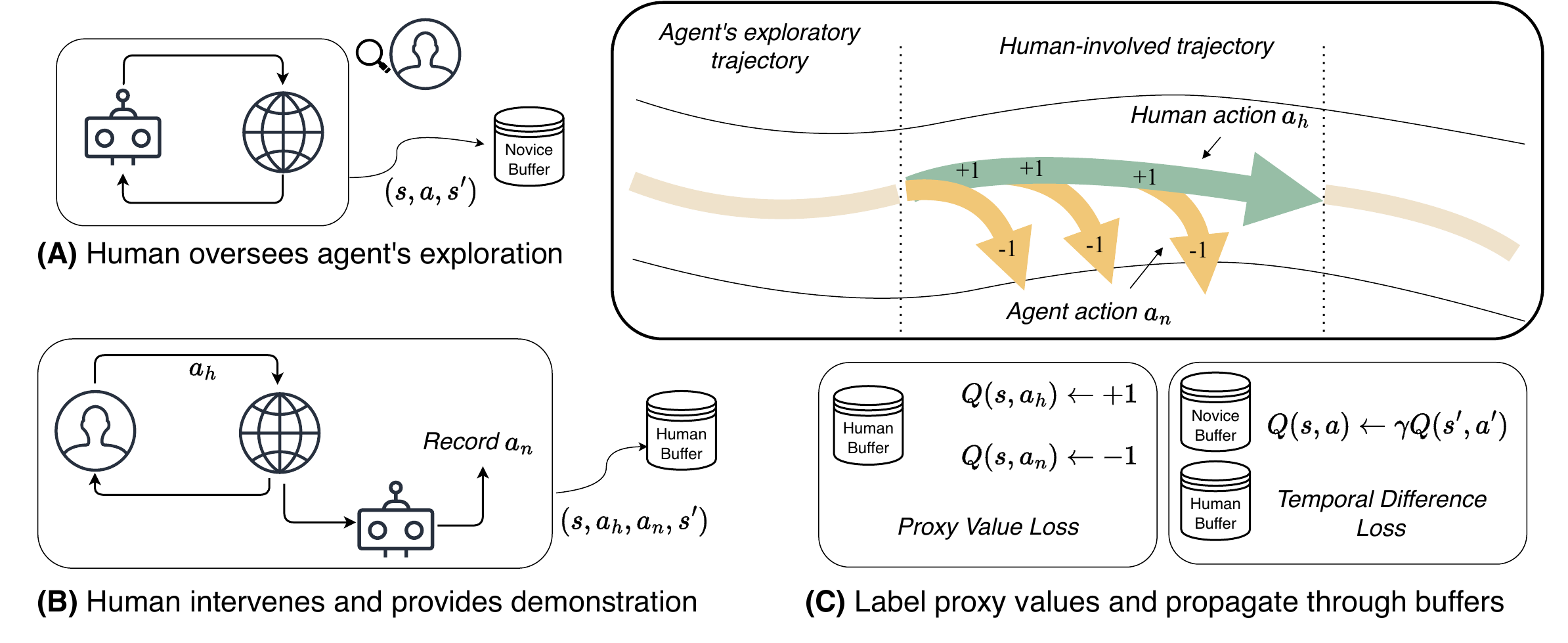}
\vspace{-0.2em}
\caption{
Illustration of Proxy Value Propagation. 
\textbf{(A)} Human oversees the agent's trial-and-error exploration with the environment. When the human subject does not intervene, the transitions will be recorded into the novice buffer $\mathcal B_n$. 
\textbf{(B)} When the human intervenes, both novice action $a_n$ and human action $a_h$ will be recorded into the human buffer $\mathcal B_h$ but only the human action will be applied to the environment.
\textbf{(C)} In training, we use the human buffer to compute proxy value loss and propagate the human intent knowledge to all transitions via TD loss without access to the reward.
}
\label{figure:framework}
\vspace{-1em}
\end{figure}

\vspace{-0.8em}
\subsection{Proxy Value Propagation}
\vspace{-0.2em}

We illustrate the active human involvement of PVP in Fig.~\ref{figure:framework}. During training, the human subject supervises the agent-environment interactions (Fig.~\ref{figure:framework} \textbf{A}). Those exploratory transitions by the agent are stored in the Novice Buffer $\mathcal B_{n} = \{(s, a_n, s')\}$. At any time, the human subject can intervene the free exploration of the agent by pressing a button in the control device (Fig.~\ref{figure:framework} \textbf{B}). While pressing the button, the human takes over the control and provides a demonstration of how to behave. During human involvement, both human and novice actions will be recorded into the Human Buffer $\mathcal B_{h} = \{(s, a_n, a_h, s')\}$. Concurrently with the human-agent shared control, our method keeps updating the novice policy by the novel Proxy Value Propagation mechanism (Fig.~\ref{figure:framework} \textbf{C}), which will be discussed later.

In the shared human-agent control, human intervention serves as a distinct indicator of suboptimal agent performance, which could result from the agent executing perilous actions or exhibiting ineffective behaviors. Thus, the optimal policy learned by the agent should (1) strive to approximate the behaviors demonstrated by the human subjects and (2) avoid performing actions that are intervened by humans.

The key insight of this work is that we can manipulate the Q values to induce desired behaviors, given that value-based RL has the nature to seek value-maximizing policy as Eq.~\ref{equation:expected-novice-policy}.
As shown in Fig.~\ref{figure:framework} \textbf{C}, for emulating human behavior and minimizing intervention, we sample data $(s, a_n, a_h)$ from the human buffer and label the Q value of the human action $a_h$ with $+1$ and the novice action $a_n$ with $-1$. This is achieved by fitting the Q network directly with PV loss:
\begin{equation}
\label{equation:proxy-q-labeling}
    J^\text{PV}(\theta) = 
    \expect_{(s, a_n, a_h)}  [| Q_\theta (s, a_h) - 1 |^2  + | Q_\theta (s, a_n) + 1 |^2 ] I(s, a_n).
\end{equation}
The transitions in the novice buffer are not intervened by the human subject, meaning they are aligned with human preferences. Meanwhile, those transitions also contain information of the forward dynamics~\citep{levine2020offline,yu2022leverage}. 
To exploit the information contained in these transitions, instead of discarding these data as in~\citep{kelly2019hg}, we propagate the proxy values to these states via TD learning in Eq.~\ref{eq:td-learning} and use those transitions together with those human-involved transitions for the policy learning. 
The final value loss is evaluated as follows:  
\begin{equation}
\begin{aligned}
\label{eq:pvp-main-loss}
J(\theta) = J^\text{PV}(\theta) + J^\text{TD}(\theta) 
 = &
\expect_{(s, a_n, a_h)\sim\mathcal B_h}  [| Q_\theta(s, a_h) - 1 |^2  + | Q_\theta(s, a_n) + 1 |^2 ] I(s, a_n)
\\
& +  \expect_{(s, a, s')\sim\mathcal B_h \bigcup \mathcal B_n} | Q_\theta(s, a) -  \gamma \max_{a'} Q_{\hat{\theta}}(s', a') |^2
\end{aligned}
\end{equation}
Then we follow the policy update process outlined in the base RL methods.

\subsection{Analysis}
\label{section:analysis}


\textbf{Connection to CQL.} 
The proposed PVP method can be interpreted as adopting the Conservative Q-Learning (CQL)~\cite{kumar2020conservative} objective for reward-free and online learning settings. 
It augments the CQL objective with an extra L2 regularization term imposed on the Q-values for human-involved transitions.
In our online learning setting, Eq.~\ref{eq:pvp-main-loss} can be reformulated as:
\begin{equation}
J(\theta) = 
\expect_{\mathcal B_h,  I(s, a_n) = 1} [ 
\underbrace{Q_\theta^2(s, a_n) + Q_\theta^2(s, a_h)}_{\text{L2 Regularization on Q}} + 2 + \underbrace{2(Q_\theta(s, a_n) - Q_\theta(s, a_h))] + \text{TD loss.} }_{\text{CQL Loss}}
\end{equation}
CQL was originally proposed to mitigate the problem of overestimated Q-values in offline RL settings. These overestimations often lead to suboptimal policies due to the optimistic selection of actions with misleadingly high values. In our work, we deal with human actions and novice actions sampled from two different distributions, where overestimation might also occur. However, unlike CQL, PVP does not have access to a reward function, meaning the Q-values are not grounded in an estimation of true values. The additional L2 regularizer therefore serves to impose constraints on the Q-values, helping to prevent unbounded growth and potential overfitting.
In Sec.~\ref{section:ablation}, we compare the learned proxy Q-values under both CQL and PVP objectives. Our results indicate that human and agent actions are more distinguishable when learned through PVP.

\revise{
\textbf{Alternative to Reward Assignment.}
On the other hand, a more straightforward idea than PVP is to assign a reward of +1 to human actions and -1 to agent actions during intervention. Unfortunately, it is not practical since the Bellman backup is conducted on the transition triplet $(s, a, s')$, where one has to use future states' values to estimate current values. Therefore, the reward must correspond to the action from the behavior policy, the action $a$ causes the transitions from $s$ to $s'$.
In our context, during involvement, the action $a = a_h$ must come from human policy as the human subject is taking control. 
Though we can assign $+1$ reward and compute value target in those human-involving transitions, we have no way to assign $-1$ to the agent's actions because we don't know the next states caused by those actions and thus we can't compute the value target. It is also not practical to query the environment to get the next state $s''\sim \mathcal P(s, a_n)$ as the $a_n$ is a potentially danger or undesired action and replaying it in the real-world environment is not feasible.
In our preliminary experiment, we find the policy fails to learn anything regardless of the amount of human involvement provided. This is because the reward will be $+1$ for all the human-involving transitions and the learning agent will find a pitfall to maximize its rewards: it always demonstrates undesired behaviors so that humans will always take control, which yields a $+1$ reward.
}

\subsection{Implementation Details}
\label{sec:impl-details}

\textbf{Base RL Methods.}
Our method can be implemented for both continuous and discrete action spaces by extending TD3~\citep{fujimoto2018addressing} and DQN~\citep{mnih2015human} with PV loss and the balanced buffer.
While TD3 uses a deterministic policy, DQN adopts epsilon-greedy exploration that makes the policy stochastic. We remove the action noise in DQN and simply follow the argmax rule to select actions.
Therefore, our method enjoys deterministic novice policy in both cases. The primary reason is that according to the feedback of human subjects, stochastic novice makes human subjects experience excessive fatigue due to the difficulty in monitoring and correcting agents’ noisy actions. This design choice makes our method more user-friendly, as shown in the user study in Sec.~\ref{section:user-study}.

\begin{figure*}[!t]
\centering
\includegraphics[width=\linewidth]{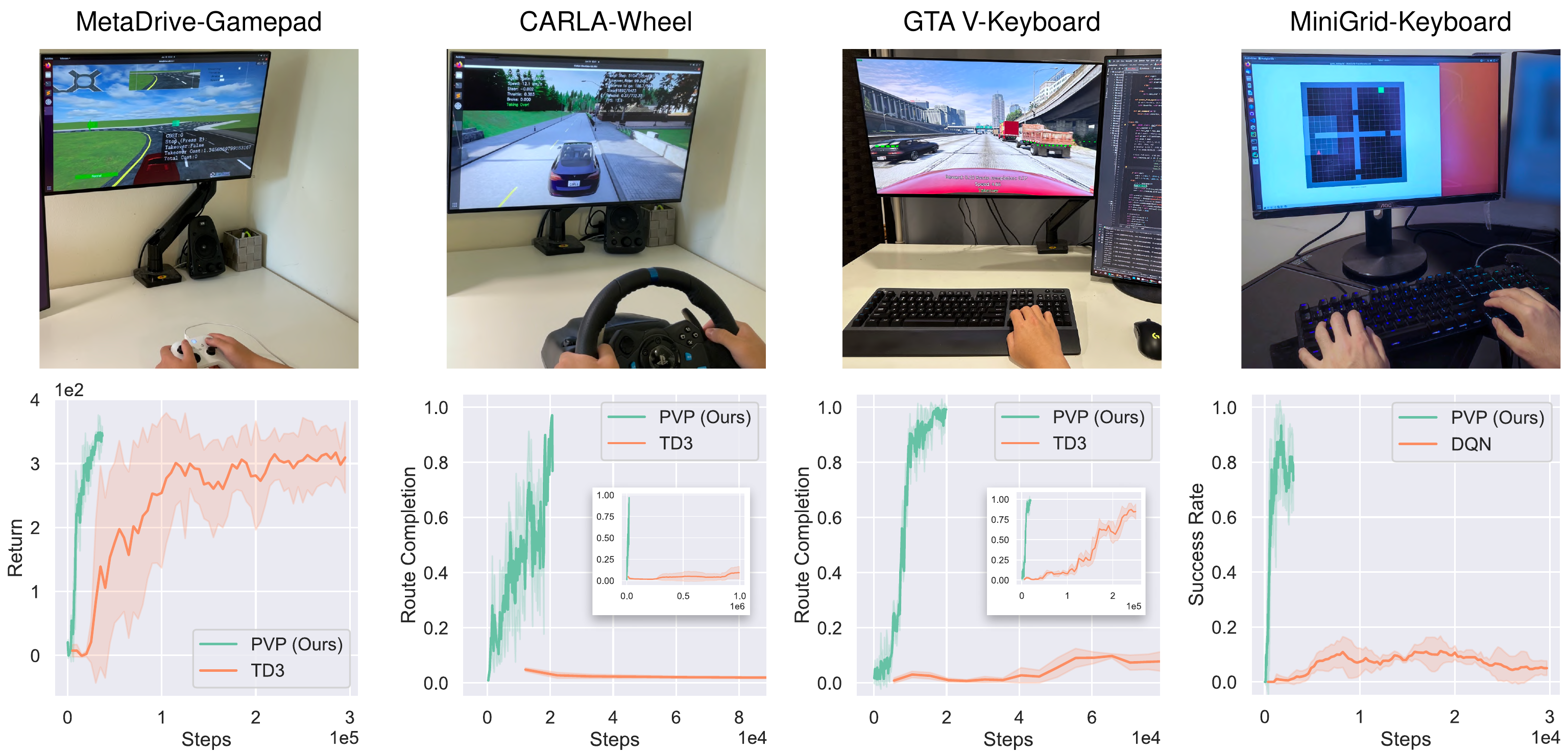}
\vspace{-0.5em}
\caption{
Evaluation of PVP under four different environments with human control devices. For each environment, we plot the test-time performance curve of the agent trained by the proposed PVP and the RL counterpart TD3. The x-coordinate is the total number of environment interactions, which indicates the time steps the training agent (in RL method) or the human-agent system (in our method) experiences during training. 
Compared to the RL counterpart, the proposed method achieves much higher performance with superior learning efficiency.
}
\vspace{-0.5em}
\label{figure:main-result}
\end{figure*}

\textbf{Balanced Buffers.}
The intervention gradually becomes sparse as the agent learns to reduce human intervention. However, those sparse intervention signals contain even more important information on how to behave under critical situations.
Previous method~\citep{li2021efficient} stores agent data and human data into one buffer and samples them uniformly. 
Abusing the notations, the ratio between the transitions from the agent's exploration and from human involvement is $|\mathcal B_n|:|\mathcal B_h|$ in each SGD batch.
The human demonstrations are overwhelmed by the amount of agent-generated trajectories, leading to inefficient learning of those critical human behaviors and even catastrophic forgetting.
For example, the driving policy sometimes fails to master acceleration at the beginning of an episode, even though the human subject has already demonstrated the expected maneuver multiple times. This is because the demonstration of initial acceleration only lasts a short period of time and thus is scarce in the buffer.
To address this issue, we balance the transitions coming from the human buffer and the novice buffer.
In each training iteration, we sample two equally-sized batches $b_n$ and $b_h$ from $\mathcal B_n$ and $\mathcal B_h$ respectively, 
each has $N/2$ samples. $N$ is the batch size for the policy update.
By concatenating $b_n$ and $b_h$, our method can balance the data from the human's demonstration and from the agent's exploration, and hence the ratio between two types of data in each SGD batch keeps $1:1$. Therefore, in the initial acceleration example above, the balanced buffer recalls the acceleration behavior, preventing catastrophic forgetting.

\begin{table*}[!t]
\centering
\begin{small}
\caption{
Comparison of different approaches in MetaDrive-Keyboard. The overall intervention rate is given besides the human data usage. 
}
\label{tab:comparing-hl-methods}
\vspace{-0.5em}
\begin{tabular}{@{}ccccccc@{}}
\toprule
\multirow{4}*{Method}
&
\multicolumn{3}{c}{Training}
&
\multicolumn{3}{c}{Testing}
\\
\cmidrule(lr){2-4}
\cmidrule(lr){5-7}
& 
\multirow{3}*{ \shortstack{Human\\Data\\Usage} }
&
\multirow{3}*{ \shortstack{Total\\Data\\Usage}  }
&
\multirow{3}*{ \shortstack{Total\\Safety\\Cost} }
&
\multirow{3}*{ \shortstack{Episodic\\Return} } & 
\multirow{3}*{ \shortstack{Episodic\\Safety\\Cost} } & 
\multirow{3}*{ \shortstack{Success\\Rate} }
~\\
~\\
~\\
\midrule
SAC & - & 1M  & 2.76K {\tiny $\pm$ 0.95K }  & {386.77} 	{\tiny $\pm$35.1} &	0.73 	{\tiny $\pm$1.18} &	0.82 	{\tiny $\pm$0.18} \\ 
PPO & - & 1M  & 24.34K {\tiny $\pm$3.56K} & 335.39 {\tiny $\pm$12.41}  &	3.41 {\tiny $\pm$1.11}  &	0.69{\tiny $\pm$0.08}   \\
TD3 & - & 1M  & 1.74K {\tiny $\pm$ 0.62K} & 318.12 {\tiny $\pm$21.9} & 0.47 {\tiny $\pm$0.08} & 0.70 {\tiny $\pm$0.09} \\
\midrule
SAC-Lag & - & 1M & 1.84K {\tiny $\pm$ 0.49K} & 351.96	{\tiny $\pm$101.88} &	{0.72} 	{\tiny $\pm$0.49} &	0.73 	{\tiny $\pm$0.29} \\
PPO-Lag & - & 1M & 11.64K {\tiny $\pm$ 4.16K} & 299.99 	{\tiny $\pm$49.46} &	1.18 {\tiny $\pm$0.83} &	0.51 	{\tiny $\pm$0.17} \\
CPO & - & 1M & 4.36K  {\tiny $\pm$2.22K} & 194.06 {\tiny $\pm$108.86} & 1.71 {\tiny $\pm$1.02} & 0.21 {\tiny $\pm$0.29} \\
\midrule
\shortstack{Human Demo.}
 & 30K              & -                                         & 39                                 & 347.523                 & 0.39                     & 0.97                 \\
\midrule
BC                  & 30K (1.0)             & -                                    & -                                  & 113.32   {\tiny $\pm$10.21}              & 2.171  {\tiny $\pm$0.65}                  & 0.073 {\tiny $\pm$0.02}      
\\
GAIL & 30K (0.015) & 2M  & 25.90K  {\tiny $\pm$ 8.15K}  & 81.51  {\tiny $\pm$ 9.43} & 1.308  {\tiny $\pm$ 0.23}  & 0.0 {\tiny $\pm$ 0.0} 
\\
\midrule
HG-Dagger           & 39.0K (0.76)           & 51K                                    & 56                                 & 116.393                 & 1.979                    & 0.045                \\
IWR                 & 35.8K (0.79)            & 45K                                   & 52                                 & 226.221                 & 1.457                    & 0.465                \\
HACO                & 19.2K (0.48)           & 40K                                  & 130                                 & 143.287                 & 1.645                    & 0.139                \\
\midrule 
{PVP w/o TD}          & 13.5K (0.34)           & 40.5K                                 & 70                                 & 252.447 & 1.277 & 0.220   \\ 
{PVP w/ Reward} &
12.8K (0.32) &
40K &
30 &
319.383 &
0.767 &
0.755 ~\\
PVP (Ours)          & 14.6K (0.37)           & 40K                                 & 76.8 {\tiny $\pm$9.3}                              & 353.636 {\tiny $\pm$23.7}                 & 0.898 {\tiny $\pm$0.15}                   & 0.857 {\tiny $\pm$0.04}               \\
\bottomrule
\end{tabular}
\end{small}
\vspace{-0.5em}
\end{table*}

\section{Experiments}
\label{sec:exp}

\subsection{Experimental Setting}

\textbf{Tasks.}
We conduct experiments on various control tasks with different observation and action spaces.
For continuous action space, we use three driving environments, MetaDrive safety benchmark~\citep{li2021metadrive}, CARLA Town01~\citep{Dosovitskiy17}, and a customized driving environment built upon Grand Theft Auto V (GTA V), a popular video game.
In these tasks, the agent needs to steer the target vehicle with low-level acceleration, braking, and steering, to reach its destination. Specifically, in MetaDrive safety environments, the agent needs to avoid any crash in the heavy-traffic scene with normal vehicles, obstacles, and parked vehicles. In MetaDrive, there exists a split of training and test environments, and we present the performance of the learned agent in a held-out test environment.
To examine our method with different observation modalities, we use the sensory state vector in MetaDrive and GTA V and the bird-eye view image in CARLA as observation.
For discrete action space, we use MiniGrid Two Room task~\citep{gym_minigrid}, which involves agent exploration such as moving toward a door and opening the door before reaching the destination. The observation of MiniGrid is the semantic map of the agent's local neighborhood. 
Please refer to Appendix~\ref{section:environment-details} for more information about the environment setup.

\textbf{Evaluation Metrics.}
In MetaDrive safety benchmark, we report \textit{total safety cost} as the number of crashes during training, which reflects the number of potential dangers exposed to the human subject during training. We also report \textit{episodic return}, \textit{episodic safety cost}, and \textit{success rate} as the test performance of the agents. Episodic safety cost is the average number of crashes in one episode. The success rate is the ratio of episodes in which agents reach the destination to the total test episodes.
In CARLA, we report \textit{route completion} and \textit{success rate}. Route completion is the ratio of the traveled distance to the length of the complete route. GTA V uses \textit{route completion} and MiniGrid uses \textit{success rate} to measure the performance.
Except for total safety cost, the aforementioned metrics measure the test-time performance, which is tested when the agent runs independently without human involvement.
For human-in-the-loop experiments, we also report the total number of human-involved transitions (\textit{human data usage}) and the \textit{overall intervention rate}, which is the ratio of human data usage to total data usage.
These show how much effort humans make to teach the agents. We also design a user study to measure the experience of human subjects in Sec.~\ref{section:user-study}.

\textbf{Human Interfaces.}
To examine the generalizability of our method, we leverage multiple control devices: Xbox Wireless Controller (Gamepad), keyboard, and Logitech G29 Racing Wheel.
We denote the MetaDrive tasks with three devices as MetaDrive-Gamepad/Keyboard/Wheel.
As shown in Fig.~\ref{figure:main-result}, human subjects can takeover through control devices and monitor the training process through the visualization of environments on the screen.
The Ethics statement is provided in Appendix~\ref{section:ablation-ethics-statement}.

\textbf{Experimental Details.} We implement most of the code with Stable-Baselines3~\citep{stablebaselines3}. 
Training results of various baselines in MetaDrive tasks are obtained from the open-source code by~\citep{li2021efficient}.
The RL baselines are repeated 5 times with different random seeds, while other human-in-the-loop methods are repeated fewer times due to limited human resources. 
In the training of the human-in-the-loop methods, a real human subject participates in each experiment and we do not use any simulated user input. 
During testing, there is no form of human involvement. 
For each experiment, we evaluate each checkpoint in the environment for multiple runs and report the average task-specified metrics as the performance of this checkpoint. We report the performance of the best checkpoint as the result of the experiment.
We provide the standard deviation if the experiments are repeated multiple runs in tables and figures.
All experiments with humans are conducted on a local computer with an Nvidia GeForce RTX 3080. The local computer can support real-time simulation and training. Hyper-parameters and other details are given in Appendix~\ref{section:environment-details} and \ref{section:appendix-hyper-parameters}. 


\textbf{Baselines.}
We test four native RL baselines: PPO~\citep{schulman2017proximal}, SAC~\citep{haarnoja2018soft}, TD3~\citep{fujimoto2018addressing} and DQN~\citep{mnih2015human}. 
We also test three safe RL baselines: Constraint Policy Optimization (CPO)~\citep{achiam2017constrained}, PPO-Lagrangian~\citep{stooke2020responsive}, SAC-Lagrangian~\citep{ha2020learning}. In all baselines above, the reward function and cost function (for MetaDrive Safety Benchmark) are defined by the environment and can be accessed by the agents.
We also test IL methods Behavior Cloning (BC) and GAIL~\citep{ho2016generative}.
Human-in-the-loop methods that learn from active human involvement are tested: Human-Gated DAgger (HG-DAgger)~\citep{kelly2019hg}, Intervention Weighted Regression (IWR)~\citep{mandlekar2020human} and Human-AI Copilot Optimization (HACO)~\citep{li2021efficient}.



\begin{table}[!t]
\centering
\begin{minipage}[!b]{0.5\linewidth}
\centering
\begin{small}
\caption{
Results of different approaches in CARLA.
}
\label{tab:carla-exp}
\begin{tabular}{@{}ccccc@{}}
\toprule 
Method	&
 \shortstack{Human\\Data}
 & 
\shortstack{Total\\Data}
& \shortstack{Route\\ Completion} & \shortstack{Success\\ Rate}\\
\toprule
PPO & - & 1M & 0.24 {\tiny $\pm$ 0.013} & 0.0 {\tiny $\pm$  0.0 } \\
TD3 &
- &
1M & 
0.11 {\tiny $\pm$  0.05 } & 
0.0 {\tiny $\pm$  0.0 }
\\
\midrule
BC & 5K &  - & 0.42 {\tiny $\pm$ 0.08} & 0.20 {\tiny $\pm$ 0.10}  \\
\midrule
HG-DAgger & 6.8K & 24K & 0.64 & 0.47 \\
IWR & 5.7K & 24K & 0.69 & 0.60 \\
HACO &
4.8K &
24K &
0.52 &
0.40
\\
\midrule
PVP (Ours) &	
6.6K &
24K &
0.92  {\tiny $\pm$ 0.05} & 
0.73  {\tiny $\pm$ 0.08} 
\\
\bottomrule
\end{tabular}%
\end{small}
\end{minipage}
\hfill
\begin{minipage}[!b]{0.44\linewidth}
\centering
\includegraphics[width=0.95\linewidth]{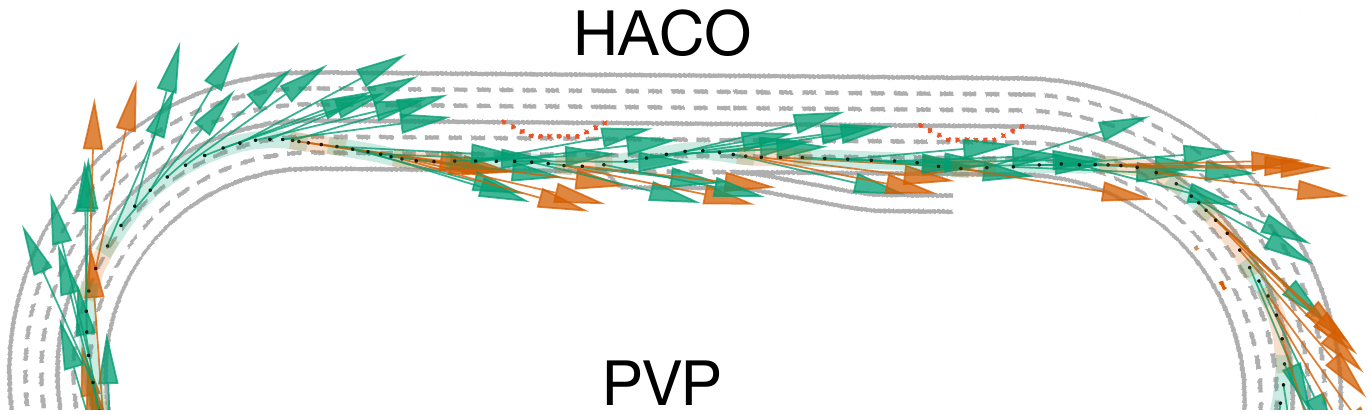}
\includegraphics[width=0.95\linewidth]{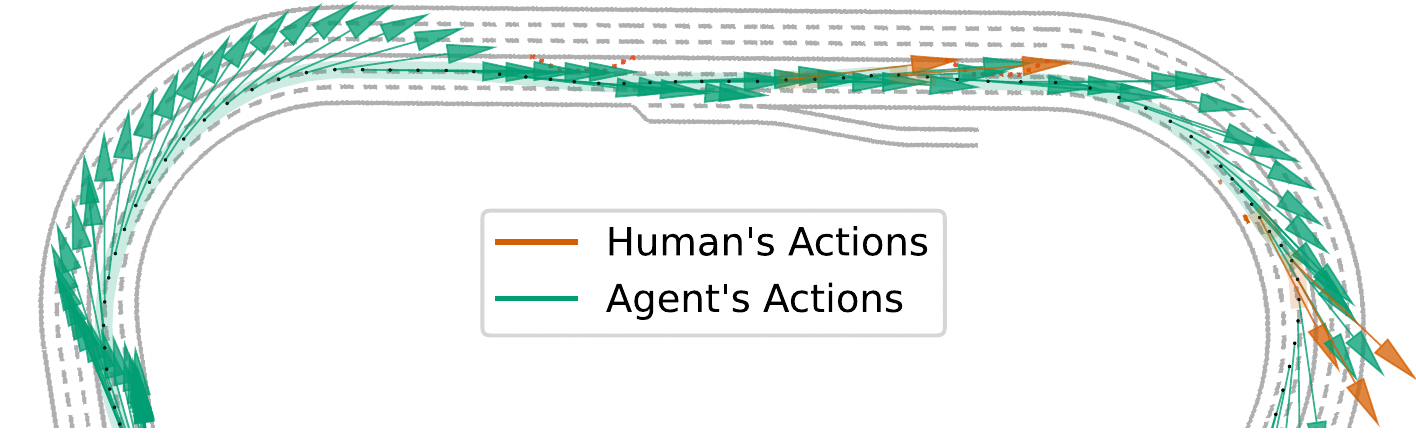}
\captionof{figure}{
We visualize the action sequences generated by HACO and PVP agents in the same MetaDrive map who are trained to 40K steps. PVP has much smoother actions.
}
\label{figure:trajectory-main}
\end{minipage}
\vspace{-0.5em}
\end{table}

\subsection{Baseline Comparison}
\label{section:baseline-comparison}

\textbf{Comparing with RL Counterparts.}
Fig.~\ref{figure:main-result} shows the curves of test-time performance.  
In MetaDrive-Gamepad, our method achieves 350 returns in 37K steps. This takes about one hour in the real-world HL experiment. TD3 baseline fails to achieve comparable results even after 300K steps of training.
In CARLA, PVP agents learn to drive within 30 minutes with our method, while TD3 cannot solve the task.
In GTA V, PVP can solve the task with 1.2K human data usage and 20K total data usage. The whole experiment takes only 16 minutes. TD3 instead utilizes 300K steps to achieve similar performance.
In MiniGrid tasks, our method successfully solves the tasks while vanilla DQN fails, showing that PVP can learn an exploratory solution and can be incorporated into discrete action space. We also show experiments on one easier and one harder MiniGrid environment in Appendix~\ref{section:appendix-extra-results}, where PVP greatly improves learning efficiency.

\begin{table}[!t]
\centering
\begin{minipage}[!b]{0.5\linewidth}
\centering
\begin{small}
\caption{
User study result. The maximum score for each item is 5.
}
\label{tab:user-study}
\begin{tabular}{@{}lllll@{}}
\toprule
 & HG-DAgger   & IWR    & HACO  & PVP     \\ 
 \midrule
Compliance      & 3.0 {\tiny $\pm$ 0.8} & 4.0{\tiny $\pm$ 0.8 } & 3.0 {\tiny $\pm$ 0.2 } & {4.8} {\tiny $\pm$ 0.5 } \\
Performance     & 2.2 {\tiny $\pm$ 1.0 }                                         & 3.7 {\tiny $\pm$ 0.9 }                                         & 3.3 {\tiny $\pm$ 0.9 }                                         & {4.8} {\tiny $\pm$ 0.5 } \\
Stress         & 3.2 {\tiny $\pm$ 0.9 }                                         & 4.5 {\tiny $\pm$ 0.5 }                                          & 2.3 {\tiny $\pm$ 0.9 }                                         & {4.7} {\tiny $\pm$ 0.6 } \\ \bottomrule
\end{tabular}
\end{small}
\end{minipage}
\hfill
\begin{minipage}[!b]{0.45\linewidth}
\centering
\includegraphics[width=0.42\linewidth]{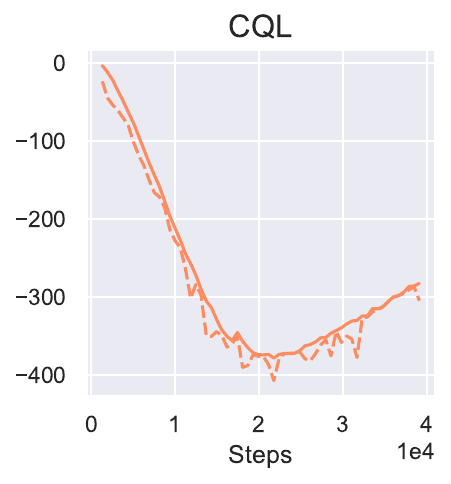}
\includegraphics[width=0.42\linewidth]{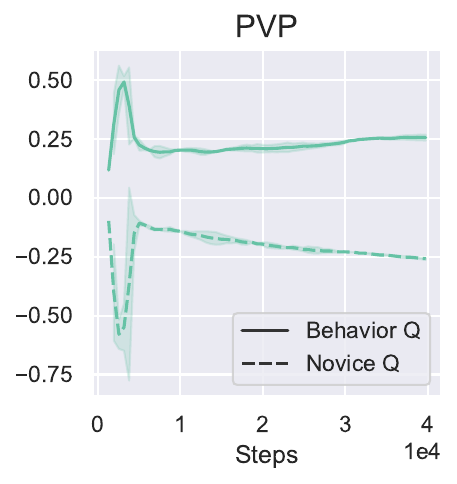}
\captionof{figure}{
Evolution of proxy values.
}
\label{figure:evolution-values}
\end{minipage}
\vspace{-1em}
\end{table}

\textbf{Comparing with Human-in-the-loop Baselines.}
Table~\ref{tab:comparing-hl-methods} suggests all tested HL methods achieve extremely low safety violations in training compared to vanilla RL and Safe RL methods, empirically supporting the preference alignment of the active human involvement, if we consider human preference is to avoid safety violation. 
Compared to other human-in-the-loop methods, our method costs the lowest human efforts in terms of human data usage and overall intervention rate, while greatly outperforming baselines in testing performance. Since MetaDrive has a training and test environment split, the result suggests PVP can learn high-quality agents with generalizability.
Similar results are shown in CARLA in Table~\ref{tab:carla-exp}. 
Compared to RL baselines, HL methods achieve decent success rates and route completion rates even with only 24K environmental interactions.
Compared to HL baselines, PVP achieves the best route completion rate.

\textbf{Visualization.}
In Fig.~\ref{figure:trajectory-main}, we visualize the action sequences of the agents trained by PVP and a human-agent shared control baseline HACO~\cite{li2021efficient}. The angle and length of each arrow represent the steering and acceleration, respectively. The human subject's actions are marked with yellow.
Compared to HACO method, PVP agent produces smoother actions, which explains its high user study scores shown in the next section.

\subsection{User Study}
\label{section:user-study}

We design a user study questionnaire to assess the experience of human subjects. Details are provided in the human subject research protocol in Appendix~\ref{appendix:user-study}. Three aspects are considered:
(1) \textbf{Compliance} measures whether the behaviors of the agent satisfy human intents.
For example, a highly compliant agent behaves like human such that the human subjects feel like they are completing objectives by themselves.
(2) \textbf{Performance} is the subjective evaluation from human subjects on whether the agent can solve the primal task, e.g. driving to the destination in navigation tasks. This score should be low if the agent cannot learn a particular behavior or forgets it even though human subjects have taught the agent multiple times.
(3)
\textbf{Stress} gauges the cognitive cost human subjects pay to train the agent.
A typical source of stress is the annoying oscillation and jitter the agent demonstrates. Unexpected behavior that requires human's instant reaction also creates stress. A lower score means more stress.

Table~\ref{tab:user-study} shows our method is the most user-friendly method. 
On the one hand, we use a deterministic novice policy that greatly alleviates the jitter and unexpected behaviors, reducing stress. 
On the other hand, our method masters human behaviors and suppresses undesired actions with the balanced buffer and proxy value, improving the user experience in compliance and performance.

\subsection{Ablation Studies}
\label{section:ablation}

\textbf{TD learning:}
As shown in Table~\ref{tab:comparing-hl-methods} ``PVP w/o TD'', 
disabling TD learning via setting $J^\text{TD}(Q) = 0$ significantly damages the performance of PVP, suggesting that propagating information from human-involved states to other states is critical to the success of PVP. 

\textbf{PVP with reward:}
Both MetaDrive and CARLA results in Table~\ref{tab:comparing-hl-methods} and~\ref{tab:carla-exp-ablation} show that adding the environmental reward doesn't bring significant improvement in the learning performance, which might be caused by the fact that the native reward function might not be aligned with human preference.

\begin{wraptable}{r}{0.48\textwidth}
\centering
\begin{small}
\caption{
Ablation studies in CARLA.
}
\label{tab:carla-exp-ablation}
\begin{tabular}{@{}cccc@{}}
\toprule 
Method	& \shortstack{Human\\Data}  & \shortstack{Route\\ Completion} & \shortstack{Success\\ Rate}\\
\toprule
HACO & 4.8K  & 0.52 & 0.40 \\
HACO w/o SP & 5.1K  & 0.49 & 0.20 \\
\midrule
PVP w/o BB & 2.8K  & 0.62   & 0.33  \\
PVP w/o NB & 4.2K  & 	0.708   & 0.33  \\
PVP w/ Rew. & 4.4K  & 0.793 & 	0.467 \\
PVP w/ SP & 12.3K  & 0.40 & 0.20 ~\\
PVP w/ CQL &
8.0K
&
0.622
&
0.266
~\\
PVP (Ours) & 6.6K  &
0.92  {\tiny $\pm$ 0.05} & 
0.73  {\tiny $\pm$ 0.08} 
\\
\bottomrule
\end{tabular}%
\end{small}
\vspace{-2em}
\end{wraptable}
\textbf{Balanced buffer:} We find that disabling balanced buffers (PVP w/o BB) makes the training unstable and leads to poor performance. \revise{This design avoids the catastrophic forgetting when the agent-generated data overwhelms the human demonstrations as in HACO~\cite{li2021efficient}.}

\textbf{Novice buffer:} We find that PVP without the Novice buffer (PVP w/o NB) yields poor performance. The agent data stored in the novice buffer contains information on human preference and the forward dynamics of the environment. Thus, PVP does not discard the agent exploratory data as opposed to HG-DAgger~\citep{kelly2019hg}. 

\textbf{Stochastic policy:}
We implement PVP based on Soft Actor-critic~\citep{haarnoja2018soft} so that the novice policy is now a stochastic policy. As shown in the ``PVP w/ SP'' in Table~\ref{tab:carla-exp-ablation}, introducing randomness in novice actions greatly reduces the performance.
The human subjects report that the novice agents with stochastic policy oscillate frequently, making it hard to respond when the agents suddenly drive toward the side road.
HACO~\cite{li2021efficient} has similar human-AI shared control as PVP, but it adopts a stochastic policy. For comparison, we also implement HACO without a stochastic policy.
``HACO w/o SP'' suggests deterministic policy can not bring significant improvement to HACO.

\textbf{Regularization on Q values}: 
As discussed in Sec.~\ref{section:analysis}, PVP objective can be interpreted as CQL with a newly introduced L2 regularization term on the Q values. We conduct the experiment to evaluate the performance of the vanilla CQL objective with other PVP designs in our reward-free online learning settings. As shown in ``PVP w/ CQL'' in Table~\ref{tab:carla-exp-ablation}, CQL objective yields worse performance. \revise{This experiment shows that the vanilla CQL doesn't work in this human active involvement setting.}
As shown in Fig.~\ref{figure:evolution-values}, the proxy value in the vanilla CQL method has a much larger magnitude which makes the values of behavior actions (the actions applied to the environment) and agent actions hard to distinguish.
PVP has smoother proxy values with a clear margin between behavior and novice Q.
CQL does not set a bound for the proxy value, thus proxy values in those extreme human actions are reinforced without a bound, making the novice policy rapidly learn those extreme actions, whereas PVP has bounded proxy values, leading to more stable training and better overall performance.

\section{Conclusion}
Learning through active human involvement is a promising approach enabling safe and efficient policy learning.
In this work, we propose \textit{Proxy Value Propagation (PVP)} that can effectively learn from the intervention and the corrective feedback from active human involvement.
\revise{
PVP can be seamlessly integrated into existing value-based RL methods and achieves highly efficient reward-free policy learning, without offline pretraining and reward engineering.
}
Human-in-the-loop experiments show the proposed method achieves superior performance and better user experience across diverse environments with different action spaces and human control devices, 
\revise{showing that the learning from active human involvement is a efficient policy learning method aligning human preference.}

\textbf{Limitations.}
(1) We only apply our method to two value-based RL methods. Advanced techniques such as exploration encouraging~\citep{osband2016deep} and prioritized replay buffer~\citep{schaul2015prioritized} can be added to further improve the result. 
(2) Our method is not applicable to tasks where humans can not provide demonstrations.
(3) We assume that human always demonstrates desired actions. We will show in Appendix~\ref{section:appendix-proof} that suboptimal human behaviors will damage learning. In this case, we can define a sparse cost function in the training environment and utilize constrained optimization~\cite{achiam2017constrained} to penalize bad demonstrations.
(4) We assume that human subjects are available and attentive throughout the entire training. While our method is proven to be effective even under heavy traffic environments, we plan to further enhance its sample efficiency. We will achieve this goal by conducting offline RL training and policy evaluation in the background or passively involving human subjects whenever the model is uncertain about the environment.

\textbf{Acknowledgment}: This work was supported by the National Science Foundation under Grant No. 2235012. The human experiment in this study is approved through the IRB\#23-000116 at UCLA.

{
\small
\bibliographystyle{plain}
\bibliography{cite}

\begin{thebibliography}{10}

\bibitem{abel2017agent}
David Abel, John Salvatier, Andreas Stuhlm{\"u}ller, and Owain Evans.
\newblock Agent-agnostic human-in-the-loop reinforcement learning.
\newblock {\em ArXiv preprint}, abs/1701.04079, 2017.

\bibitem{achiam2017constrained}
Joshua Achiam, David Held, Aviv Tamar, and Pieter Abbeel.
\newblock Constrained policy optimization.
\newblock In Doina Precup and Yee~Whye Teh, editors, {\em Proceedings of the 34th International Conference on Machine Learning, {ICML} 2017, Sydney, NSW, Australia, 6-11 August 2017}, volume~70 of {\em Proceedings of Machine Learning Research}, pages 22--31. {PMLR}, 2017.

\bibitem{celemin2019interactive}
Carlos Celemin and Javier Ruiz-del Solar.
\newblock An interactive framework for learning continuous actions policies based on corrective feedback.
\newblock {\em Journal of Intelligent \& Robotic Systems}, 95(1):77--97, 2019.

\bibitem{gym_minigrid}
Maxime Chevalier-Boisvert, Lucas Willems, and Suman Pal.
\newblock Minimalistic gridworld environment for openai gym.
\newblock \url{https://github.com/maximecb/gym-minigrid}, 2018.

\bibitem{christiano2017deep}
Paul~F. Christiano, Jan Leike, Tom~B. Brown, Miljan Martic, Shane Legg, and Dario Amodei.
\newblock Deep reinforcement learning from human preferences.
\newblock In Isabelle Guyon, Ulrike von Luxburg, Samy Bengio, Hanna~M. Wallach, Rob Fergus, S.~V.~N. Vishwanathan, and Roman Garnett, editors, {\em Advances in Neural Information Processing Systems 30: Annual Conference on Neural Information Processing Systems 2017, December 4-9, 2017, Long Beach, CA, {USA}}, pages 4299--4307, 2017.

\bibitem{dafoe2020open}
Allan Dafoe, Edward Hughes, Yoram Bachrach, Tantum Collins, Kevin~R McKee, Joel~Z Leibo, Kate Larson, and Thore Graepel.
\newblock Open problems in cooperative ai.
\newblock {\em arXiv preprint arXiv:2012.08630}, 2020.

\bibitem{degrave2022magnetic}
Jonas Degrave, Federico Felici, Jonas Buchli, Michael Neunert, Brendan Tracey, Francesco Carpanese, Timo Ewalds, Roland Hafner, Abbas Abdolmaleki, Diego de~Las~Casas, et~al.
\newblock Magnetic control of tokamak plasmas through deep reinforcement learning.
\newblock {\em Nature}, 602(7897):414--419, 2022.

\bibitem{Dosovitskiy17}
Alexey Dosovitskiy, German Ros, Felipe Codevilla, Antonio Lopez, and Vladlen Koltun.
\newblock {CARLA}: {An} open urban driving simulator.
\newblock In {\em Proceedings of the 1st Annual Conference on Robot Learning}, pages 1--16, 2017.

\bibitem{fu2018learning}
Justin Fu, Katie Luo, and Sergey Levine.
\newblock Learning robust rewards with adverserial inverse reinforcement learning.
\newblock In {\em International Conference on Learning Representations}, 2018.

\bibitem{fujimoto2018addressing}
Scott Fujimoto, Herke van Hoof, and David Meger.
\newblock Addressing function approximation error in actor-critic methods.
\newblock In Jennifer~G. Dy and Andreas Krause, editors, {\em Proceedings of the 35th International Conference on Machine Learning, {ICML} 2018, Stockholmsm{\"{a}}ssan, Stockholm, Sweden, July 10-15, 2018}, volume~80 of {\em Proceedings of Machine Learning Research}, pages 1582--1591. {PMLR}, 2018.

\bibitem{guan2021widening}
Lin Guan, Mudit Verma, Sihang Guo, Ruohan Zhang, and Subbarao Kambhampati.
\newblock Widening the pipeline in human-guided reinforcement learning with explanation and context-aware data augmentation.
\newblock {\em Advances in Neural Information Processing Systems}, 34, 2021.

\bibitem{ha2020learning}
Sehoon Ha, Peng Xu, Zhenyu Tan, Sergey Levine, and Jie Tan.
\newblock Learning to walk in the real world with minimal human effort, 2020.

\bibitem{haarnoja2018soft}
Tuomas Haarnoja, Aurick Zhou, Pieter Abbeel, and Sergey Levine.
\newblock Soft actor-critic: Off-policy maximum entropy deep reinforcement learning with a stochastic actor.
\newblock In Jennifer~G. Dy and Andreas Krause, editors, {\em Proceedings of the 35th International Conference on Machine Learning, {ICML} 2018, Stockholmsm{\"{a}}ssan, Stockholm, Sweden, July 10-15, 2018}, volume~80 of {\em Proceedings of Machine Learning Research}, pages 1856--1865. {PMLR}, 2018.

\bibitem{ho2016generative}
Jonathan Ho and Stefano Ermon.
\newblock Generative adversarial imitation learning.
\newblock In Daniel~D. Lee, Masashi Sugiyama, Ulrike von Luxburg, Isabelle Guyon, and Roman Garnett, editors, {\em Advances in Neural Information Processing Systems 29: Annual Conference on Neural Information Processing Systems 2016, December 5-10, 2016, Barcelona, Spain}, pages 4565--4573, 2016.

\bibitem{presil}
Braden Hurl, Krzysztof Czarnecki, and Steven~L. Waslander.
\newblock Precise synthetic image and lidar (presil) dataset for autonomous vehicle perception.
\newblock {\em CoRR}, abs/1905.00160, 2019.

\bibitem{jonnavittula2021learning}
Ananth Jonnavittula and Dylan~P Losey.
\newblock Learning to share autonomy across repeated interaction.
\newblock In {\em 2021 IEEE/RSJ International Conference on Intelligent Robots and Systems (IROS)}, pages 1851--1858. IEEE, 2021.

\bibitem{kelly2019hg}
Michael Kelly, Chelsea Sidrane, Katherine Driggs-Campbell, and Mykel~J Kochenderfer.
\newblock Hg-dagger: Interactive imitation learning with human experts.
\newblock In {\em 2019 International Conference on Robotics and Automation (ICRA)}, pages 8077--8083. IEEE, 2019.

\bibitem{kendall2019learning}
Alex Kendall, Jeffrey Hawke, David Janz, Przemyslaw Mazur, Daniele Reda, John-Mark Allen, Vinh-Dieu Lam, Alex Bewley, and Amar Shah.
\newblock Learning to drive in a day.
\newblock In {\em 2019 International Conference on Robotics and Automation (ICRA)}, pages 8248--8254. IEEE, 2019.

\bibitem{knox2012reinforcement}
W~Bradley Knox and Peter Stone.
\newblock Reinforcement learning from human reward: Discounting in episodic tasks.
\newblock In {\em 2012 IEEE RO-MAN: The 21st IEEE international symposium on robot and human interactive communication}, pages 878--885. IEEE, 2012.

\bibitem{krakovna2020specification}
Victoria Krakovna, Jonathan Uesato, Vladimir Mikulik, Matthew Rahtz, Tom Everitt, Ramana Kumar, Zac Kenton, Jan Leike, and Shane Legg.
\newblock Specification gaming: the flip side of ai ingenuity.
\newblock {\em DeepMind Blog}, 2020.

\bibitem{kumar2020conservative}
Aviral Kumar, Aurick Zhou, George Tucker, and Sergey Levine.
\newblock Conservative q-learning for offline reinforcement learning.
\newblock In Hugo Larochelle, Marc'Aurelio Ranzato, Raia Hadsell, Maria{-}Florina Balcan, and Hsuan{-}Tien Lin, editors, {\em Advances in Neural Information Processing Systems 33: Annual Conference on Neural Information Processing Systems 2020, NeurIPS 2020, December 6-12, 2020, virtual}, 2020.

\bibitem{lee2021pebble}
Kimin Lee, Laura Smith, and Pieter Abbeel.
\newblock Pebble: Feedback-efficient interactive reinforcement learning via relabeling experience and unsupervised pre-training.
\newblock {\em arXiv preprint arXiv:2106.05091}, 2021.

\bibitem{leike2018scalable}
Jan Leike, David Krueger, Tom Everitt, Miljan Martic, Vishal Maini, and Shane Legg.
\newblock Scalable agent alignment via reward modeling: a research direction.
\newblock {\em arXiv preprint arXiv:1811.07871}, 2018.

\bibitem{levine2020offline}
Sergey Levine, Aviral Kumar, George Tucker, and Justin Fu.
\newblock Offline reinforcement learning: Tutorial, review, and perspectives on open problems.
\newblock {\em ArXiv preprint}, abs/2005.01643, 2020.

\bibitem{li2021metadrive}
Quanyi Li, Zhenghao Peng, Lan Feng, Qihang Zhang, Zhenghai Xue, and Bolei Zhou.
\newblock Metadrive: Composing diverse driving scenarios for generalizable reinforcement learning.
\newblock {\em IEEE transactions on pattern analysis and machine intelligence}, 2022.

\bibitem{li2021efficient}
Quanyi Li, Zhenghao Peng, and Bolei Zhou.
\newblock Efficient learning of safe driving policy via human-ai copilot optimization.
\newblock In {\em International Conference on Learning Representations}, 2022.

\bibitem{macglashan2017interactive}
James MacGlashan, Mark~K Ho, Robert Loftin, Bei Peng, Guan Wang, David~L Roberts, Matthew~E Taylor, and Michael~L Littman.
\newblock Interactive learning from policy-dependent human feedback.
\newblock In {\em International conference on machine learning}, pages 2285--2294. PMLR, 2017.

\bibitem{mandel2017add}
Travis Mandel, Yun{-}En Liu, Emma Brunskill, and Zoran Popovic.
\newblock Where to add actions in human-in-the-loop reinforcement learning.
\newblock In Satinder~P. Singh and Shaul Markovitch, editors, {\em Proceedings of the Thirty-First {AAAI} Conference on Artificial Intelligence, February 4-9, 2017, San Francisco, California, {USA}}, pages 2322--2328. {AAAI} Press, 2017.

\bibitem{mandlekar2020human}
Ajay Mandlekar, Danfei Xu, Roberto Mart{\'\i}n-Mart{\'\i}n, Yuke Zhu, Li~Fei-Fei, and Silvio Savarese.
\newblock Human-in-the-loop imitation learning using remote teleoperation.
\newblock {\em ArXiv preprint}, abs/2012.06733, 2020.

\bibitem{menda2019ensembledagger}
Kunal Menda, Katherine Driggs-Campbell, and Mykel~J Kochenderfer.
\newblock Ensembledagger: A bayesian approach to safe imitation learning.
\newblock In {\em 2019 IEEE/RSJ International Conference on Intelligent Robots and Systems (IROS)}, pages 5041--5048. IEEE, 2019.

\bibitem{mnih2015human}
Volodymyr Mnih, Koray Kavukcuoglu, David Silver, Andrei~A Rusu, Joel Veness, Marc~G Bellemare, Alex Graves, Martin Riedmiller, Andreas~K Fidjeland, Georg Ostrovski, et~al.
\newblock Human-level control through deep reinforcement learning.
\newblock {\em nature}, 518(7540):529--533, 2015.

\bibitem{najar2020interactively}
Anis Najar, Olivier Sigaud, and Mohamed Chetouani.
\newblock Interactively shaping robot behaviour with unlabeled human instructions.
\newblock {\em Autonomous Agents and Multi-Agent Systems}, 34(2):1--35, 2020.

\bibitem{osband2016deep}
Ian Osband, Charles Blundell, Alexander Pritzel, and Benjamin~Van Roy.
\newblock Deep exploration via bootstrapped {DQN}.
\newblock In Daniel~D. Lee, Masashi Sugiyama, Ulrike von Luxburg, Isabelle Guyon, and Roman Garnett, editors, {\em Advances in Neural Information Processing Systems 29: Annual Conference on Neural Information Processing Systems 2016, December 5-10, 2016, Barcelona, Spain}, pages 4026--4034, 2016.

\bibitem{ouyang2022training}
Long Ouyang, Jeff Wu, Xu~Jiang, Diogo Almeida, Carroll~L Wainwright, Pamela Mishkin, Chong Zhang, Sandhini Agarwal, Katarina Slama, Alex Ray, et~al.
\newblock Training language models to follow instructions with human feedback.
\newblock {\em arXiv preprint arXiv:2203.02155}, 2022.

\bibitem{pakdamanian2021deeptake}
Erfan Pakdamanian, Shili Sheng, Sonia Baee, Seongkook Heo, Sarit Kraus, and Lu~Feng.
\newblock Deeptake: Prediction of driver takeover behavior using multimodal data.
\newblock In {\em Proceedings of the 2021 CHI Conference on Human Factors in Computing Systems}, pages 1--14, 2021.

\bibitem{palan2019learning}
Malayandi Palan, Gleb Shevchuk, Nicholas Charles~Landolfi, and Dorsa Sadigh.
\newblock Learning reward functions by integrating human demonstrations and preferences.
\newblock In {\em Robotics: Science and Systems}, 2019.

\bibitem{stablebaselines3}
Antonin Raffin, Ashley Hill, Adam Gleave, Anssi Kanervisto, Maximilian Ernestus, and Noah Dormann.
\newblock Stable-baselines3: Reliable reinforcement learning implementations.
\newblock {\em Journal of Machine Learning Research}, 22(268):1--8, 2021.

\bibitem{reddy2018shared}
Siddharth Reddy, Anca~D Dragan, and Sergey Levine.
\newblock Shared autonomy via deep reinforcement learning.
\newblock {\em Robotics: Science and Systems}, 2018.

\bibitem{ross2010efficient}
St{\'e}phane Ross and Drew Bagnell.
\newblock Efficient reductions for imitation learning.
\newblock In {\em Proceedings of the thirteenth international conference on artificial intelligence and statistics}, pages 661--668. JMLR Workshop and Conference Proceedings, 2010.

\bibitem{russell2019human}
Stuart Russell.
\newblock {\em Human compatible: Artificial intelligence and the problem of control}.
\newblock Penguin, 2019.

\bibitem{sadigh2017active}
Dorsa Sadigh, Anca~D Dragan, Shankar Sastry, and Sanjit~A Seshia.
\newblock Active preference-based learning of reward functions.
\newblock {\em UC Berkeley}, 2017.

\bibitem{samvelyan2019starcraft}
Mikayel Samvelyan, Tabish Rashid, Christian~Schroeder De~Witt, Gregory Farquhar, Nantas Nardelli, Tim~GJ Rudner, Chia-Man Hung, Philip~HS Torr, Jakob Foerster, and Shimon Whiteson.
\newblock The starcraft multi-agent challenge.
\newblock {\em ArXiv preprint}, abs/1902.04043, 2019.

\bibitem{saunders2018trial}
William Saunders, Girish Sastry, Andreas Stuhlmueller, and Owain Evans.
\newblock Trial without error: Towards safe reinforcement learning via human intervention.
\newblock In {\em Proceedings of the 17th International Conference on Autonomous Agents and MultiAgent Systems}, pages 2067--2069. International Foundation for Autonomous Agents and Multiagent Systems, 2018.

\bibitem{schaul2015prioritized}
Tom Schaul, John Quan, Ioannis Antonoglou, and David Silver.
\newblock Prioritized experience replay.
\newblock {\em arXiv preprint arXiv:1511.05952}, 2015.

\bibitem{schulman2017proximal}
John Schulman, Filip Wolski, Prafulla Dhariwal, Alec Radford, and Oleg Klimov.
\newblock Proximal policy optimization algorithms.
\newblock {\em ArXiv preprint}, abs/1707.06347, 2017.

\bibitem{silver2016mastering}
David Silver, Aja Huang, Chris~J Maddison, Arthur Guez, Laurent Sifre, George Van Den~Driessche, Julian Schrittwieser, Ioannis Antonoglou, Veda Panneershelvam, Marc Lanctot, et~al.
\newblock Mastering the game of go with deep neural networks and tree search.
\newblock {\em nature}, 529(7587):484--489, 2016.

\bibitem{spencer2020learning}
Jonathan Spencer, Sanjiban Choudhury, Matthew Barnes, Matthew Schmittle, Mung Chiang, Peter Ramadge, and Siddhartha Srinivasa.
\newblock Learning from interventions.
\newblock In {\em Robotics: Science and Systems (RSS)}, 2020.

\bibitem{stooke2020responsive}
Adam Stooke, Joshua Achiam, and Pieter Abbeel.
\newblock Responsive safety in reinforcement learning by {PID} lagrangian methods.
\newblock In {\em Proceedings of the 37th International Conference on Machine Learning, {ICML} 2020, 13-18 July 2020, Virtual Event}, volume 119 of {\em Proceedings of Machine Learning Research}, pages 9133--9143. {PMLR}, 2020.

\bibitem{wang2018intervention}
Fan Wang, Bo~Zhou, Ke~Chen, Tingxiang Fan, Xi~Zhang, Jiangyong Li, Hao Tian, and Jia Pan.
\newblock Intervention aided reinforcement learning for safe and practical policy optimization in navigation.
\newblock In {\em Conference on Robot Learning}, pages 410--421. PMLR, 2018.

\bibitem{wang2021appli}
Zizhao Wang, Xuesu Xiao, Bo~Liu, Garrett Warnell, and Peter Stone.
\newblock Appli: Adaptive planner parameter learning from interventions.
\newblock In {\em 2021 IEEE international conference on robotics and automation (ICRA)}, pages 6079--6085. IEEE, 2021.

\bibitem{wang2021apple}
Zizhao Wang, Xuesu Xiao, Garrett Warnell, and Peter Stone.
\newblock Apple: Adaptive planner parameter learning from evaluative feedback.
\newblock {\em IEEE Robotics and Automation Letters}, 6(4):7744--7749, 2021.

\bibitem{warnell2018deep}
Garrett Warnell, Nicholas~R. Waytowich, Vernon Lawhern, and Peter Stone.
\newblock Deep {TAMER:} interactive agent shaping in high-dimensional state spaces.
\newblock In Sheila~A. McIlraith and Kilian~Q. Weinberger, editors, {\em Proceedings of the Thirty-Second {AAAI} Conference on Artificial Intelligence, (AAAI-18), the 30th innovative Applications of Artificial Intelligence (IAAI-18), and the 8th {AAAI} Symposium on Educational Advances in Artificial Intelligence (EAAI-18), New Orleans, Louisiana, USA, February 2-7, 2018}, pages 1545--1554. {AAAI} Press, 2018.

\bibitem{wirth2017survey}
Christian Wirth, Riad Akrour, Gerhard Neumann, Johannes F{\"u}rnkranz, et~al.
\newblock A survey of preference-based reinforcement learning methods.
\newblock {\em Journal of Machine Learning Research}, 18(136):1--46, 2017.

\bibitem{xu2022look}
Yunkun Xu, Zhenyu Liu, Guifang Duan, Jiangcheng Zhu, Xiaolong Bai, and Jianrong Tan.
\newblock Look before you leap: Safe model-based reinforcement learning with human intervention.
\newblock In {\em Conference on Robot Learning}, pages 332--341. PMLR, 2022.

\bibitem{yu2022leverage}
Tianhe Yu, Aviral Kumar, Yevgen Chebotar, Karol Hausman, Chelsea Finn, and Sergey Levine.
\newblock How to leverage unlabeled data in offline reinforcement learning.
\newblock {\em arXiv preprint arXiv:2202.01741}, 2022.

\bibitem{zhang2017query}
Jiakai Zhang and Kyunghyun Cho.
\newblock Query-efficient imitation learning for end-to-end simulated driving.
\newblock In {\em Proceedings of the AAAI Conference on Artificial Intelligence}, volume~31, 2017.

\end{thebibliography}
}

\newpage
\appendix

\section{Ethics Statement}
\label{section:ablation-ethics-statement}
Human subjects get paid to participate in the experiments. 
They can pause or stop the experiment if any discomfort happens. 
No human subjects are injured because all tasks we test are in virtual simulation. 
Each experiment will not last longer than one hour and subjects will rest at least three hours after one experiment. 
During training and data processing, no personal information is revealed in the collected dataset or the trained agents.
We have obtained IRB approval to conduct this project.

\section{Human Subject Research Protocol}
\label{appendix:user-study}

\textbf{Recruiting and Requirement.} 
For our study, we recruit 5 human subjects. All of them are college students and have the age from 20 to 30 years. Furthermore, every participant are required to have a valid driver's license and have experience in playing video game. Participation in our study is entirely voluntary. We ensure transparency by informing all subjects about the nature of the experiments and how their demonstrations would be used. Every subject provide written consent, confirming they are fully aware and in agreement. Additionally, the study is conducted with the IRB approval. 

\textbf{Onboarding Period.} 
Participants are required to undergo a practice session, during which they drive under complete control to get a sense of the control devices (wheel, gamepad and keyboard), the environment interface, the dynamics of each environment and how an episode will fail or success. Each subject get familiar with all the control devices and all the environments, which is indicating by performing at least 10 successful episodes, before they participate in the main experiments.

\textbf{Main Experiment.} During the initial stages of the formal experiment, subjects are \textit{advised} to retain full control of the agent for the first one or two episodes. Subsequently, they may begin to let the agents taking control and intervene as necessary. The objective in all driving experiments is twofold: firstly, to safely navigate the vehicle to its designated destination, and secondly, to ensure the vehicle's operation aligns with traffic regulations and human preferences. 

Subjects are encouraged to perform intervention whenever they perceive the vehicle might be in a dangerous situation, in violation of traffic rules, or in whatever scenario the human subjects feel they wouldn't behave in the way the novice policies do. 

To ensure data integrity and counter potential proficiency biases, the order of experiments with different control devices, tasks and training algorithms is randomized for each subject. By doing this we mitigate the bias that a subject might become more familiar with the task when experimenting different algorithms.

\textbf{User Study Questionnaire.}
We design a user study questionnaire to assess the experience of human subjects. The questionnaire is provided in Appendix~\ref{appendix:user-study}. Three aspects are considered:
\begin{itemize}
[leftmargin=1em,topsep=0pt,itemsep=-0.3em]
\item
\textbf{Compliance} measures whether the behaviors of the agent satisfy human intents. For example, a highly compliant agent behaves like human such that the human subjects feel like they are completing objectives by themselves.
\item
\textbf{Performance} is the subjective evaluation from human subjects on whether the agent can solve the primal task, e.g. driving to the destination in navigation tasks. This score should be low if the agent cannot learn a particular behavior or forget it even though human subjects have taught the agent multiple times.
\item
\textbf{Stress} gauges the cognitive cost human subjects pay to train the agent. A typical source of stress is the annoying oscillation and jitter the agent demonstrates. Unexpected behavior that requires human's instant reaction also creates stress. A lower score means more stress.
\end{itemize}

The same questions are repeated for each algorithm the human subject experimenting on.

\fbox{\parbox{\textwidth}{

\textbf{Compliance}: Generally, do you think the agent trained with this method complies with your intention? The higher score the better. 

Examples: 

(+) Good: Agent drives as you so that you don't even need to take over.

(-) Bad: Sudden unexpected behavior makes you mad.

Choices: 1, 2, 3, 4, 5

~\\

\textbf{Performance}: Do you think the agent trained with this method learns fast and performs well in terms of solving the task? The higher score the better.


Examples:

(+) Good: The agent learns fast so I don't need to take over too much in the later period.

(-) Bad: The agent forgets what it learns so I have to re-teach it.

(-) Bad: The agent never learns a specific behavior like accelerating or turning even though I have taught it so many times.

Choices: 1, 2, 3, 4, 5

~\\

\textbf{Stress}: Do you think training with this agent is tired or stressed? The lower score the more fatigue and stress. A higher score means you are more relaxed.

Tiredness might come from many sources: Oscillating trajectory, unexpected behaviors, degrading performance that you have to re-teach, etc.

It is possible that your agent is not performing well but you don't feel tired training it. On the other hand, it is possible that your agent has good performance but still causes fatigue due to unexpected behaviors.

Choices: 1, 2, 3, 4, 5

}}

\section{Demo Video}
\label{section:supplementary-video}


Please find our demo video in the supplementary material. This video shows the footage of human experiments and the comparisons between agents learned by the baselines and the proposed method. 
The video contains three sections:
\begin{enumerate}
\item The first section shows how we learn the driving policy in CARLA task within 20 minutes. We also compare the behavior of agents learned from PVP and TD3 baseline. 
\item In the second section, we show the footage of MetaDrive human experiment where the human subject uses a gamepad as the control device. We present the behavior comparison between PVP and TD3 baseline. 
\item In the third section, we show the applicability of our method to other tasks. PVP performs well in GTA V and can drive smoothly on the highway. In the discrete control tasks, the behavior comparison between PVP and DQN baseline in MiniGrid Empty Room and Four Room are provided.
\end{enumerate}

\section{Preference Alignment}
\label{section:appendix-proof}

Here we provide a conceptual framework to describe the compliance of human intention.
First, we introduce a ground-truth indicator $C: \mathcal S\times \mathcal A \to \{0, 1\}$ of the intention violation, denoting whether the action is undesired. $C$ is not revealed to the learning algorithm.
\begin{equation}
C(s, a) = 
\begin{cases}
  1, & \text{if } a  \text{ violates human intention} 
  \\
  0, & \text{otherwise.}
\end{cases}
\end{equation}
We will derive the upper bound of the discounted occurrence of intent violation, a measure of training time human intent compliance:
\begin{equation}
S_{\pi_b} = S_{\pi_b}(s_0) = \expect_{\tau\sim P_{\pi_b}} \sum_{t}\gamma^{t} C(s_{t}, a_{t}),
\end{equation}
where $P_{\pi_b}$ denotes the probability distribution of trajectories deduced by the behavior policy $\pi_b$.

During training, a human subject shares control with the learning agent. 
The agent's policy is a deterministic policy $\mu_n(s)$, the human's policy is a stochastic policy $\pi_h(a|s)$.
The human subject intervenes $I(s, a) = \text{True}$ under certain state and agent's action $a_n$.
The mixed behavior policy $\pi_b$ that produces the real actions to the environment is denoted as:
\begin{equation}
\label{equation:behavior-policy-with-deterministic-novice}
  \pi_b(a|s) 
  = 
  (1 - I(s, \mu_n(s)))\delta(a - \mu_n(s))
  + 
  I(s, \mu_n(s)){\pi_h}(a|s),
\end{equation}
where we use Dirac delta distribution to represent the deterministic novice policy.

{
Two important assumptions on the human subject are introduced:
}
\begin{assumption}[Error rate of human policy]
\label{assumption:human-policy-error-rate}
During human-AI shared control, the probability that the human subject produces an undesired action is bounded by a small value $\epsilon < 1$:
\begin{equation}
	\expect_{s \sim P_{\pi_b}, a\sim {\pi_h}(\cdot|s)} C(s, a) \le \epsilon  .
\end{equation}
\end{assumption}

\begin{assumption}[Error rate of intervention policy]
\label{assumption:intervention-error-rate}
During human-AI shared control, the probability that the human subject does not intervene when the action is undesired is bounded by a small value $\kappa < 1$:
\begin{equation}
\expect_{s \sim P_{\pi_b}}  (1 - I(s, \mu(s))) C(s, \mu(s)) \le \kappa  .
\end{equation}
\end{assumption}

We introduce the following theorem and give the proof as follows.
\begin{theorem}[Upper bound of intent violation]
The discounted occurrence of intent violation ${S_{\pi_b}}$ of the behavior policy ${\pi_b}$ is bounded by the error rate of the human action $\epsilon$, the error rate of the human intervention $\kappa$ and the intervention rate $\psi = \expect_{s\sim P_{\pi_b}} I(s, a_n)$:
\begin{equation}
{S_{\pi_b}}
\le
\cfrac{1}{1 - \gamma}(
\kappa + \epsilon \psi
).
\end{equation}
\end{theorem}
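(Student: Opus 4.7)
The plan is to expand the discounted intent-violation functional by linearity in time, decompose the behavior policy $\pi_b$ at each state via Eq.~\ref{equation:behavior-policy-with-deterministic-novice} into a non-intervention branch and an intervention branch, bound each branch separately using Assumptions~\ref{assumption:human-policy-error-rate} and \ref{assumption:intervention-error-rate}, and finally collapse the geometric series. Concretely, I would first write
\begin{equation*}
S_{\pi_b} \;=\; \sum_{t=0}^{\infty} \gamma^{t}\, \expect_{(s_t,a_t)\sim P_{\pi_b}} C(s_t,a_t),
\end{equation*}
justified by Fubini/monotone convergence since $C \in \{0,1\}$.

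Second, for a fixed $t$ I would condition on $s_t$ and use the mixture form of $\pi_b$ to split
\begin{equation*}
\expect_{a_t\sim \pi_b(\cdot|s_t)} C(s_t,a_t)
= \bigl(1 - I(s_t,\mu_n(s_t))\bigr)\, C(s_t,\mu_n(s_t)) + I(s_t,\mu_n(s_t))\, \expect_{a\sim \pi_h(\cdot|s_t)} C(s_t,a).
\end{equation*}
Taking expectation over $s_t\sim P_{\pi_b}$, Assumption~\ref{assumption:intervention-error-rate} immediately bounds the first summand by $\kappa$. For the second summand, I would read Assumption~\ref{assumption:human-policy-error-rate} as a bound on the human error rate \emph{on the event of intervention}, so that factoring out the indicator gives
\begin{equation*}
\expect_{s_t}\!\bigl[I(s_t,\mu_n(s_t))\, \expect_{a\sim\pi_h(\cdot|s_t)} C(s_t,a)\bigr] \;\le\; \epsilon\,\psi,
\end{equation*}
since $\psi = \expect_{s}I(s,\mu_n(s))$ is precisely the intervention probability. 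Combining the two gives $\expect C(s_t,a_t)\le \kappa + \epsilon\psi$ uniformly in $t$.

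Third, summing $\sum_{t\ge 0}\gamma^t = \tfrac{1}{1-\gamma}$ yields the stated bound. The main obstacle is the second step: a naive reading of Assumption~\ref{assumption:human-policy-error-rate} bounds $\expect_{s\sim P_{\pi_b},a\sim\pi_h} C(s,a)$ unconditionally by $\epsilon$, which does not algebraically produce the product $\epsilon\psi$ once the intervention indicator is present inside the expectation. Resolving this cleanly requires either (i) interpreting $\epsilon$ as the conditional human error rate given intervention, so that one can pull the indicator out of the expectation, or (ii) strengthening the assumption to a conditional one on the event $\{I(s,\mu_n(s))=1\}$. I would make this interpretive choice explicit at the start of the proof, after which the rest reduces to the mechanical decomposition and geometric summation above.
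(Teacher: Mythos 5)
Your proposal matches the paper's proof essentially step for step: the same mixture decomposition of $\pi_b$ into the non-intervention and intervention branches, the same per-step bound $\kappa + \epsilon\psi$, and the same geometric summation to $\frac{1}{1-\gamma}(\kappa+\epsilon\psi)$. The interpretive issue you flag is real but is also present in the paper's own proof, which silently pulls $\epsilon$ out of $\expect_{s}[I(s,\mu_n(s))\expect_{a\sim\pi_h(\cdot|s)}C(s,a)]$ as if the human-error assumption were a pointwise (or intervention-conditional) bound on $\expect_{a\sim\pi_h(\cdot|s)}C(s,a)$ rather than the stated unconditional average bound; your explicit choice to read it conditionally is the honest way to license that step, and otherwise your argument is identical to the paper's.
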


\begin{proof}

Consider Eq.~\ref{equation:behavior-policy-with-deterministic-novice}, we have:
\begin{equation}
\begin{aligned}
\expect_{s \sim P_{\pi_b}, a\sim \pi_b(\cdot|s)} C(s, a) 
& =
\expect_{s \sim P_{\pi_b}}\lbrace [1 - I(s, \mu_n(s))] C(s, \mu_n(s))
+
I(s, \mu_n(s)) \expect_{a\sim \pi_h(\cdot|s)} C(s, a)
\rbrace
~\\
& \le 
\kappa + 
\epsilon
\expect_{s \sim P_{\pi_b}} I(s, \mu_n(s)) 
 = \kappa + 
\epsilon \psi
\end{aligned}
\end{equation}

The upper bound of $S_{\pi_b}$:
\begin{equation}
\begin{aligned}
S_{\pi_b} = 
\expect_{\tau \sim P_{\pi_b}} \sum_{t=0}\gamma^{t} C(s_{t}, a_{t})
\le 
\sum_{t=0}\gamma^{t}
(
\kappa + \epsilon \psi
)
=
\cfrac{1}{1 - \gamma}(\kappa + \epsilon\psi)
\end{aligned}
\end{equation}	
\end{proof}


\section{Environment Details}
\label{section:environment-details}

\begin{figure}[H]
\begin{minipage}{0.45\linewidth}
\centering
\includegraphics[width=\textwidth]{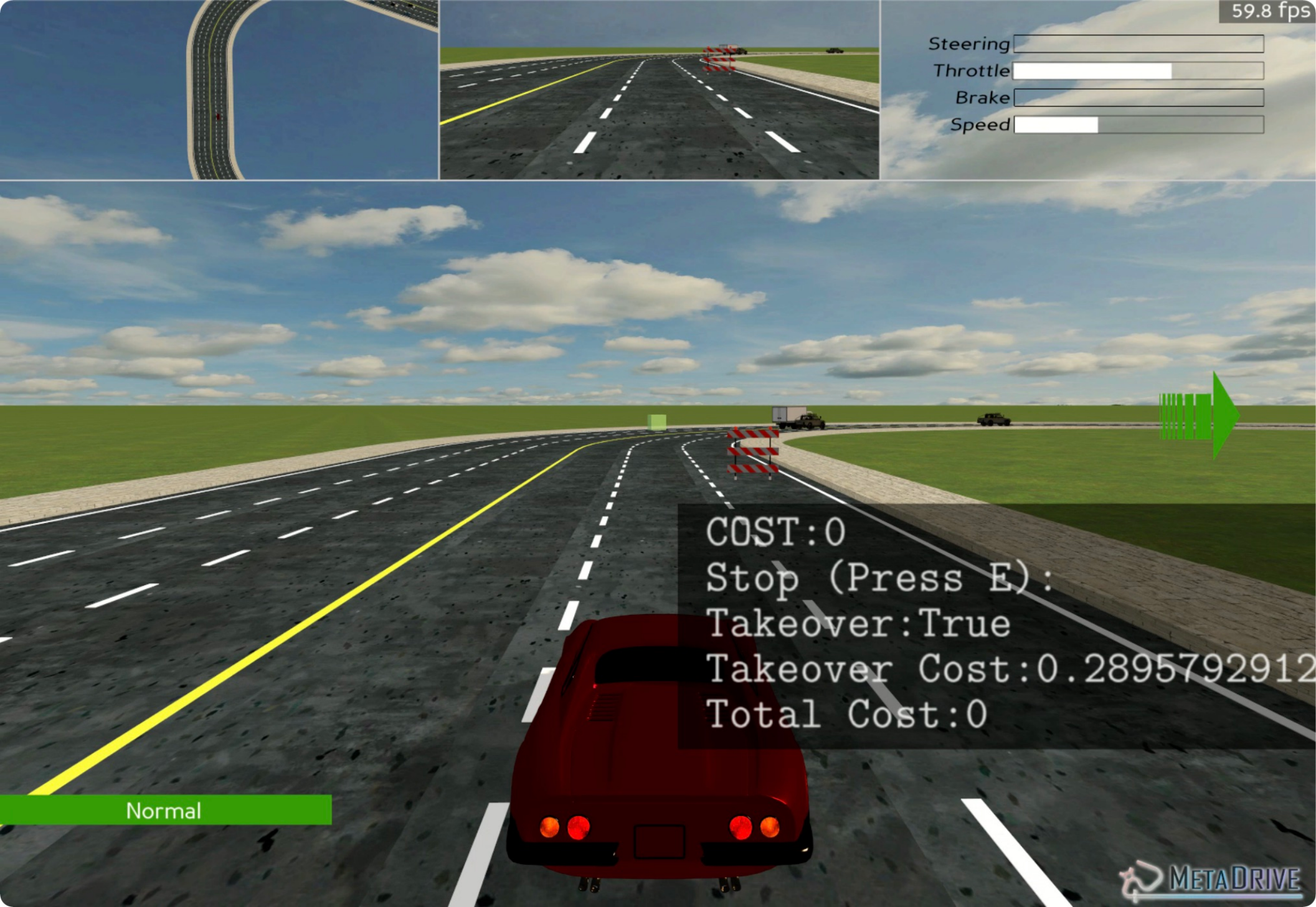}
\caption{
MetaDrive Safety benchmark.
}
\end{minipage}\hfill
\begin{minipage}{0.5\linewidth}
\centering
\includegraphics[width=\textwidth]{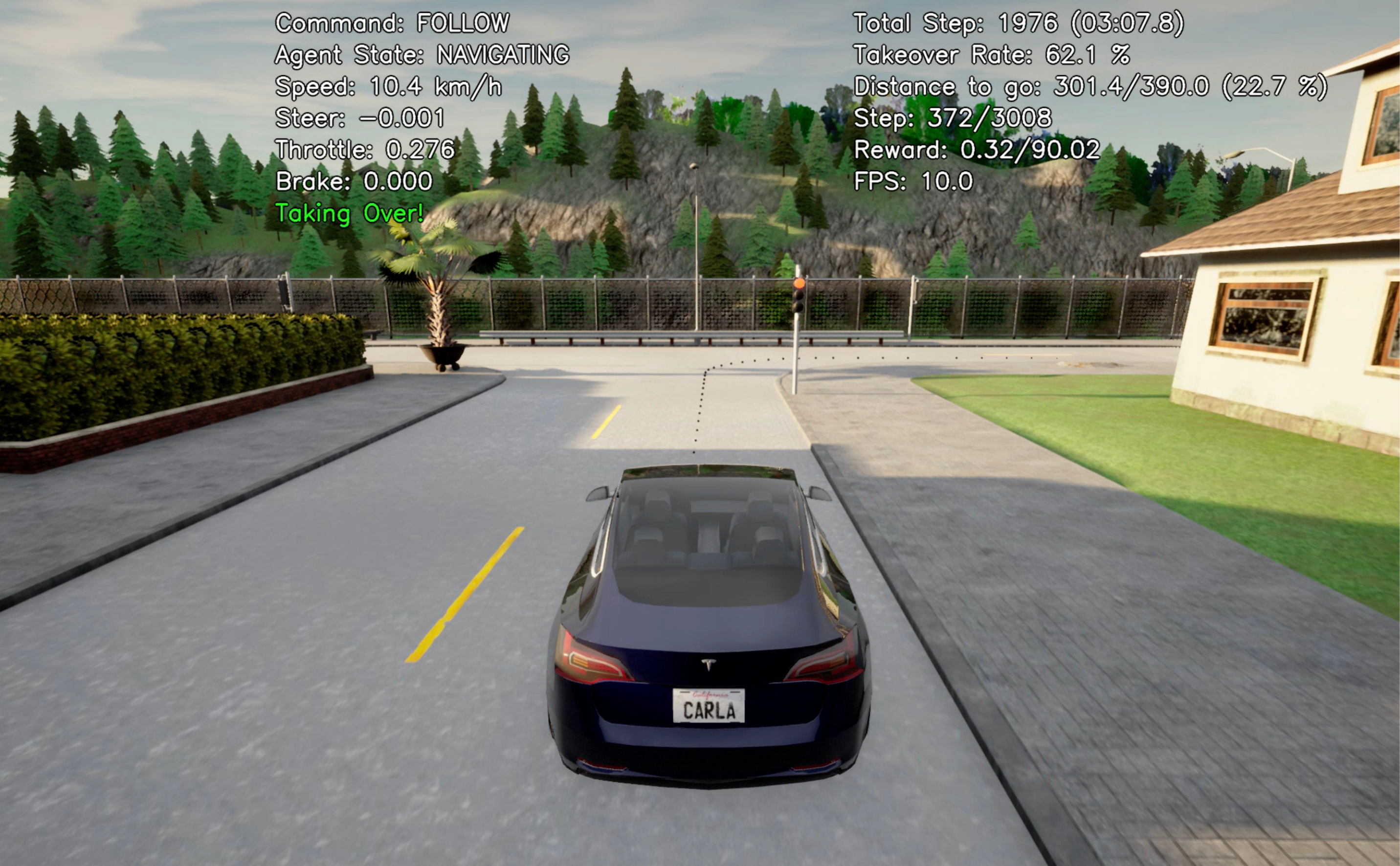}
\caption{
CARLA Town01.
}
\end{minipage}
\end{figure}

\begin{figure}[H]
\begin{minipage}{0.6\linewidth}
\centering
\includegraphics[width=\textwidth]{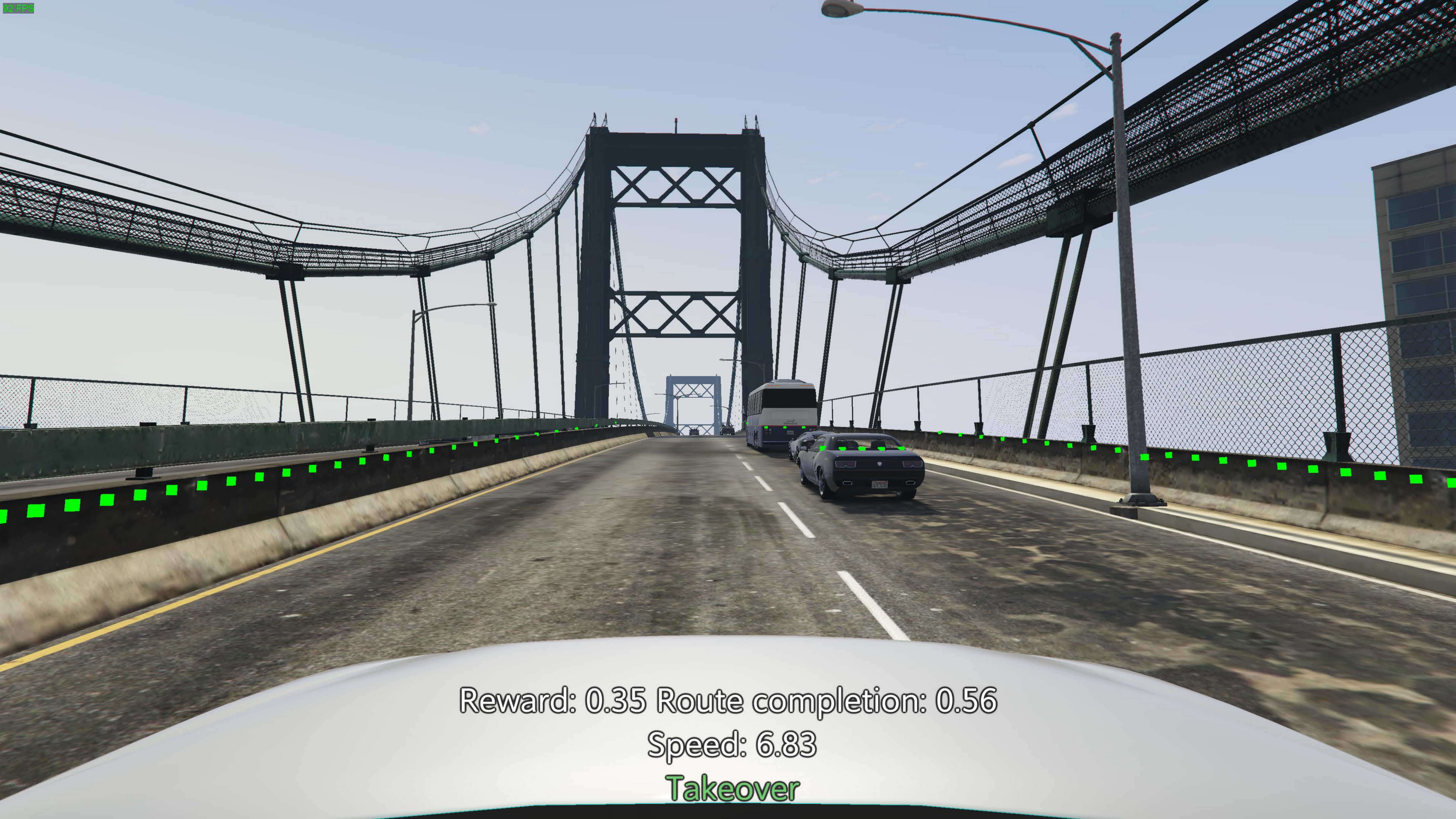}
\caption{
GTA V Training Environment.
}
\end{minipage}
\hfill
\begin{minipage}{0.35\linewidth}
\centering
\includegraphics[width=0.9\textwidth]{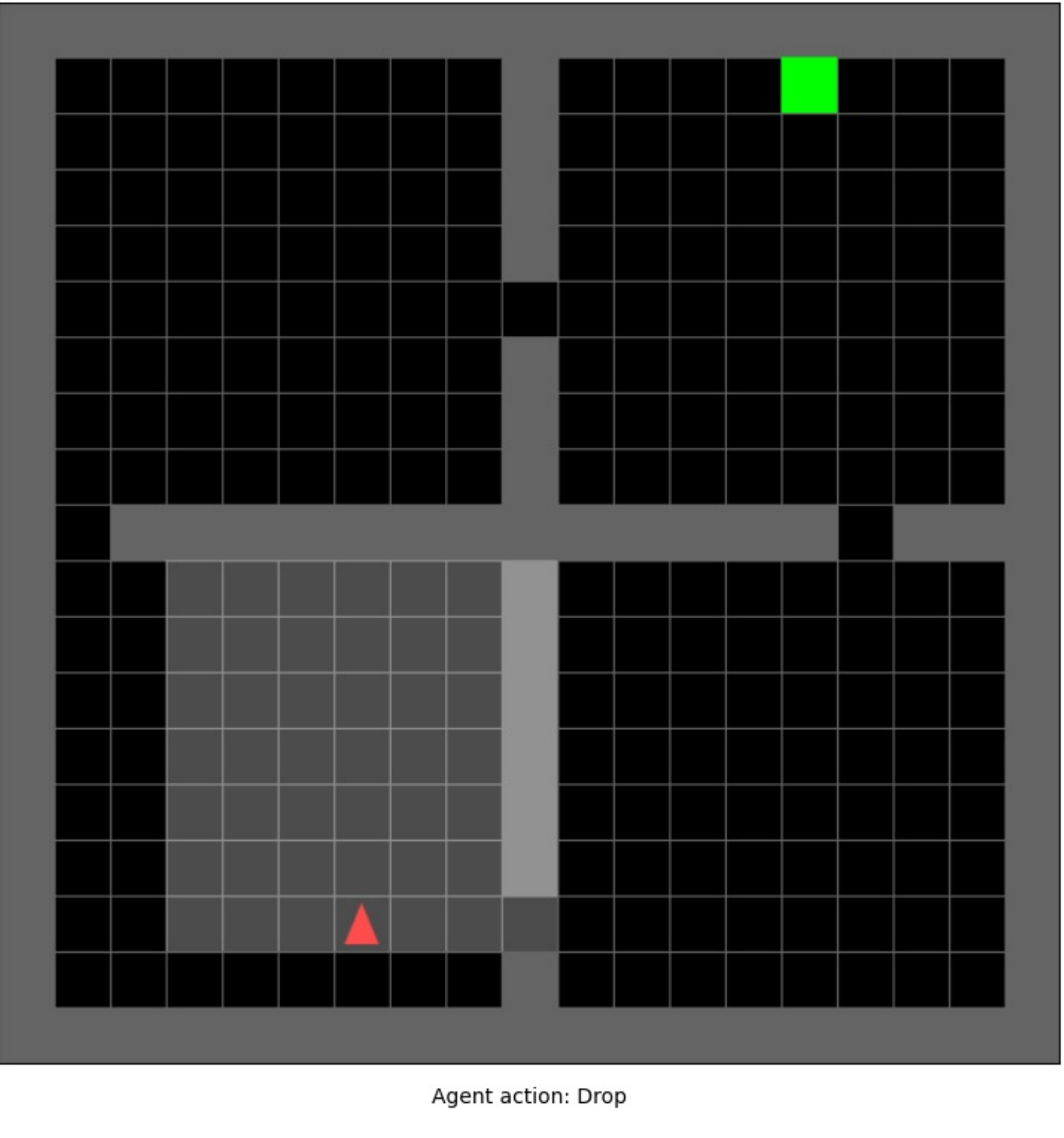}
\caption{
MiniGrid (Four Room).
}
\end{minipage}
\end{figure}

\begin{table}[H]
\centering
\begin{small}
\caption{Summary of the experiment environments.}
\label{table:environment}
\begin{tabular}{@{}lllll@{}}
\toprule
Environment                & Human Input Device & Observation Format  & Action Space & \shortstack{Training \& \\Test Set Split} \\ \midrule
MetaDrive & Gamepad, Keyboard, Wheel & State Vector        & Continuous   & Yes                         \\
CARLA                      & Wheel              & Bird-eye View Image & Continuous   & No                          \\
GTA V                      & Keyboard           & State Vector & Continuous   & Yes    
     \\
MiniGrid                   & Keyboard           & Semantic Map        & Discrete     & No                          \\
\bottomrule
\end{tabular}
\end{small}
\end{table}

To avoid the potential risks of employing human subjects in physical experiments, we benchmark different approaches in four virtual simulated environments.
We conduct experiments on various tasks with different observation and action spaces and human input devices. 
Table~\ref{table:environment} summarizes the differences.

\textbf{Continuous action space environments.}
For continuous action space, we use MetaDrive Safety Benchmark~\citep{li2021metadrive}, CARLA Town01 environment~\citep{Dosovitskiy17} and a customized policy learning environment built upon Grand Theft Auto V (GTA V). In these tasks, the agent needs to steer the target car with low-level acceleration, brake and steering and move toward the destination.

MetaDrive Safety Benchmark preserves the capacity to evaluate the safety and generalizability in unseen environments since it uses procedural generation to synthesize an unlimited number of driving maps for the split of training and test sets, which is useful to benchmark the generalization capability of different approaches in the context of safe driving. 
We train agents in the training set, which contains 50 different scenes, and roll out the learning agents in the test set, which contains another 50 unique scenes. At each episode, the scene (road network) and the spawn location of traffic vehicles and ego vehicle are randomized.
We use sensory state vector in MetaDrive as the observation for agents and thus apply MLP network architecture.
When running pure RL methods in MetaDrive Safety Benchmark, a -1 penalty will be added to the reward when a crash happens. This is for a fair comparison with the safe RL methods who have access to the cost function directly.
\revise{Specifically, we follow the default reward scheme in MetaDrive. The reward function in MetaDrive safety benchmark is composed of four parts as follows:
\begin{equation}
\label{eq:reward-functgion}
  R = c_{disp}R_{disp} + c_{speed}R_{speed} +  c_{collision}R_{collision} + R_{term}.
\end{equation}
\begin{enumerate}
    \item The displacement reward: $R_{disp} = d_t - d_{t-1}$, wherein the $d_t$ and $d_{t-1}$ denotes the longitudinal movement in meters of the target vehicle in Frenet coordinates of the target trajectory between two consecutive time steps. If the agent drives in the wrong way then the displacement reward will be multiplied by $-1$. The displacement reward provides a \textbf{dense reward} to encourage the agent to move forward. We set $c_{disp} = 1$.
    \item The speed reward: $R_{speed} = v_{t} / v_{max}$, where $v_t$, $v_{max}$ denotes current speed and maximum allow speed in current road in $km/h$, respectively. If the agent drives in wrong way then the speed reward will be multiplied by $-1$. We set $c_{speed} = 0.1$.
    \item The collision reward: $R_{collision} = 1$ if a collision with a vehicle, human, or object happens. Otherwise, it is $0$. The coefficient $c_{collision} = 5$. 
    \item The terminal reward: $R_{term}$ is non-zero only at the last time step. At that step, we set $R_{disp} = R_{speed} = R_{collision} = 0$ and assign $R_{term}$ according to the terminal state. $R_{term}$ is set to $+10$ if the vehicle reaches the destination (successes) and $-5$ if the vehicle drives out of the road.
\end{enumerate}
For measuring the safety, collision to vehicles, obstacles, sidewalk raises a cost $+1$ at each time step. The sum of cost generated in one episode is the episodic cost.
}

In CARLA, we train and test agents in the Town01 environment. There exist many predefined routes in the town with different spawn locations and destinations. The length and spawn point of each route is randomized for each episode. We use the bird-eye view image in CARLA as observation and thus CNN is used as the feature extractor.
\revise{In CARLA environment the reward function follows the reward function in MetaDrive safety benchmark. }

In GTA V, we manually pick start and end coordinates in the world map to form multiple routes in different scenes.
We split those scenes to the training and test sets.
The training set contains two scenes with straight roads, turns, and medium traffic. The test set contains one different scene. For each episode, the traffic condition is randomized by the game engine. 
\revise{In GTA V environment the reward function follows the reward function in MetaDrive safety benchmark.
}
The terminations include arriving at the destination (success), crashing with objects or vehicles for more than five frames (failure), and timeout (failure). 
The discrete keyboard input will be translated into continuous steering and acceleration signals for controls.
Our customized human-in-the-loop compatible policy learning environment builds upon GTA V with a full set of well-defined observation, reward and termination conditions. The environment will be open-sourced and available to the community \footnote{The customized environment builds upon prior efforts on the communication between GTA V engine and Python interface:
\url{https://github.com/aitorzip/DeepGTAV}, \url{https://github.com/gdpinchina/DeeperGTAV}, \url{https://github.com/aitorzip/VPilot}}.

For these tasks, the reward function is composed of two parts: a sparse termination reward (+10 when reaching the destination) and a dense moving reward (the distance moving toward the destination within one step).

\textbf{Discrete action space environment.}
For discrete action space, we test on MiniGrid Two Room task~\citep{gym_minigrid}.
MiniGrid Two Room is a task requiring heavy exploration since the agent needs to move toward a door and open the door before reaching the destination. 
The spawn locations, the destinations, door locations and the geometry of each room are randomized.
The observation of MiniGrid is the semantic map of agent's local neighborhood. MiniGrid tasks only support using the keyboard as the input device. Only in the MiniGrid task, we render the agent’s action in the environment so that the human can decide whether to take over or return back based on both the current state and agent's action. But this is not feasible in other tasks since other tasks require real-time responses from humans and there is not enough time for humans to observe agent’s actions even if we plot those actions in the visualization interface. 
\revise{Following the default reward function as in original repository, in MiniGrid environment a sparse reward is used. When the agent reach the goal, $+1$ reward is given and otherwise the reward is always $0$.}

\textbf{Real-time experiment.}
Note that the local computer we use for human-in-the-loop experiments, which has an Nvidia GeForce RTX 3080 gpu, can support real-time simulation and training. 
In MetaDrive and CARLA, the physics simulation is run at 10Hz in the virtual world and in GTA V the frequency is 30Hz. That is, each environmental step will cause the virtual world advancing 0.1 / 0.033 seconds.
After each environment interaction, PVP updates its policy once, inferring and back-propagating one SGD batch.
Our experience suggests that the local computer can effortlessly support concurrent running of the simulation with human-agent shared control as well as the background policy update at the frequency in wall-time higher than the simulation frequency in the virtual world. 
That is, our training can run at frequency higher than 10Hz / 30Hz so that the time-elapse is actually faster in the real world than in the virtual world. We will limit the FPS to the system frequency so that the human subjects experience realistic time-elapse.

\textbf{Control devices.}
CARLA tasks use a Wheel, GTA V tasks use a keyboard, and MiniGrid tasks use a keyboard (provide discrete control signals). In all devices, a button is configured to indicate intervention. There is another button in the devices that activates an emergency stop. If any discomfort happens, human subjects can pause or stop the experiment immediately.

\section{Extra Experimental Results}

\subsection{Impact of Control Devices}
\label{section:appendix-impact-of-control-devices}

\begin{table}[H]
\begin{small}
\centering
\caption{
The impact of different human input devices in MetaDrive benchmark.
}
\label{table:input-devices}
\begin{tabular}{@{}cccccccc@{}}
\toprule
\multirow{4}*{ \shortstack{Input\\Device} } & 
\multirow{4}*{Method} &
\multicolumn{3}{c}{Training} &
\multicolumn{3}{c}{Testing}
\\
\cmidrule(lr){3-5}
\cmidrule(lr){6-8}
& &
\multirow{3}*{ \shortstack{Human\\Data\\Usage} }
&
\multirow{3}*{ \shortstack{Total\\Data\\Usage}  }
&
\multirow{3}*{ \shortstack{Total\\Safety\\Cost} }
&
\multirow{3}*{ \shortstack{Episodic\\Return} } & 
\multirow{3}*{ \shortstack{Episodic\\Safety\\Cost} } & 
\multirow{3}*{ \shortstack{Success\\Rate} }
~\\
~\\
~\\
\toprule
\multirow{2}*{\shortstack{Wheel}} &
{HACO}
& 
21.2K (0.53) & 40K & 42 &
250.039 & 
1.453 & 
0.355 
\\
\cmidrule{2-8}
& PVP &
10.3K (0.26) &
40K &
12
&
336.657 & 
1.543  & 
0.808  
\\
\midrule
\multirow{2}*{\shortstack{Gamepad}} &
{HACO}
& 
28.4K (0.71) & 40K & 55 &
71.37 & 
1.97 & 
0.0 
\\
\cmidrule{2-8}
& PVP &
7.4K (0.19) &
40K &
21
&
356.99 & 
1.31  & 
0.920  
\\
\midrule
\multirow{2}*{\shortstack{Keyboard}} &
HACO   
& 19.2K (0.48)           & 40K                                  & 130                                 & 143.28                 & 1.645                    & 0.139                
\\
\cmidrule{2-8}
&PVP          & 14.6K (0.37)           & 40K                                 & 76                                 & 353.636                 & 0.898                    & 0.857                \\
\bottomrule
\end{tabular}%
\end{small}
\end{table}

Table~\ref{table:input-devices} presents the experiment results with different input devices in MetaDrive benchmark.
In all settings, PVP agents outperform baseline method, showing the generalizability of PVP on different control devices. 
We observe that HACO~\citep{li2021efficient} has performance discrepancy with different input devices.
When using Gamepad, human subjects tend to push and pull the stick to the limits, producing extreme values.
Extreme actions are particularly harmful to previous method as it does not incorporate the regularization terms on Q function to bound the Q values.
When using the keyboard, the human subjects press arrow keys to indicate increasing/decreasing current steering/acceleration values for an increment. Therefore there will be fewer extreme values happening than using a Gamepad, which explains why the baseline HACO performs better with the keyboard compared to Gamepad.
Due to less extreme values, when using Steering Wheel, HACO achieves good performance.

\textbf{A case study in a toy environment.}
We retrieve the agents stored during the training of HACO and PVP in CARLA task. We test them in a straight road in CARLA town and plot their actuating signals in Fig.~\ref{fig:carla-extra}.
In this task, the steering should be always close to zero. However, we find that as the training iterations increase, the HACO agents gradually demonstrate unstable steering and their steering is deviating.  
In human-AI shared control, such unstable behaviors force human subjects to involve frequently. In contrast, the PVP policies learn a much better solution in lane keeping.

\begin{figure}[H]
\centering
         \centering
         \includegraphics[width=0.4\textwidth]{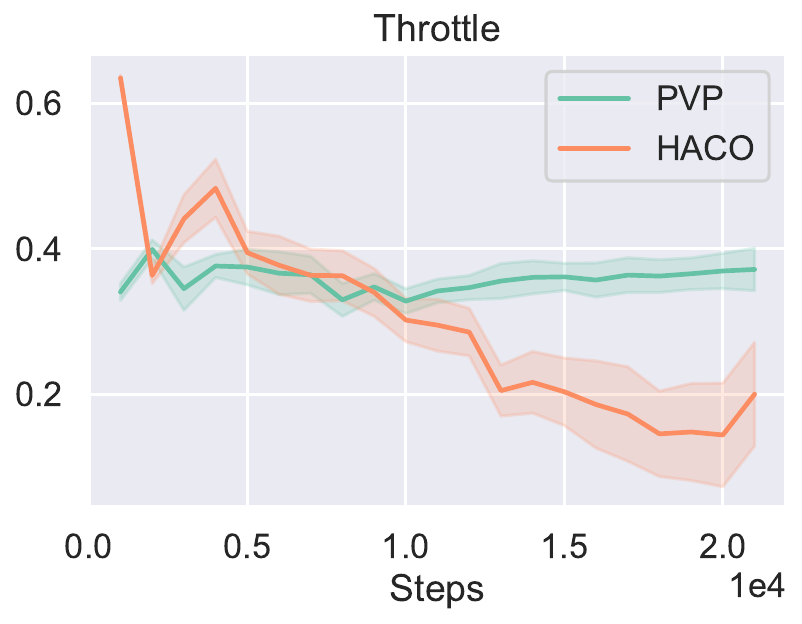}
        \hspace{30pt}
         \includegraphics[width=0.4\textwidth]{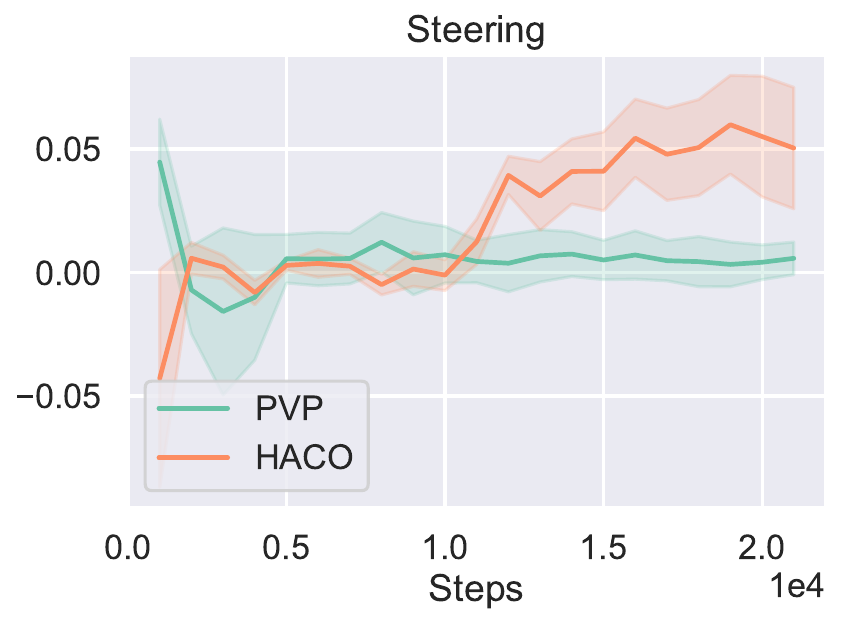}
\vspace{-1em}
\caption{
Control signals in a straight road in CARLA.
}
\vspace{-1em}
\label{fig:carla-extra}
\end{figure}

\textbf{Visualization of action sequences in training.}
In Fig.~\ref{fig:visualization-of-shared-control}, we present the visualization of the trajectories during human-robot shared control. 
Comparing the visualization of HACO and PVP, we find that PVP generates smoother trajectories.
Stable and smooth agent actions greatly improve human subjects' experience and relieve their stress during human-robot shared control. We can also find that as the training goes, PVP requires less human involvement.
These results explain the performance of PVP and is aligned with the behavior shown in the supplementary video.


\newpage

\begin{figure}[H]
\centering
\includegraphics[width=0.8\textwidth]{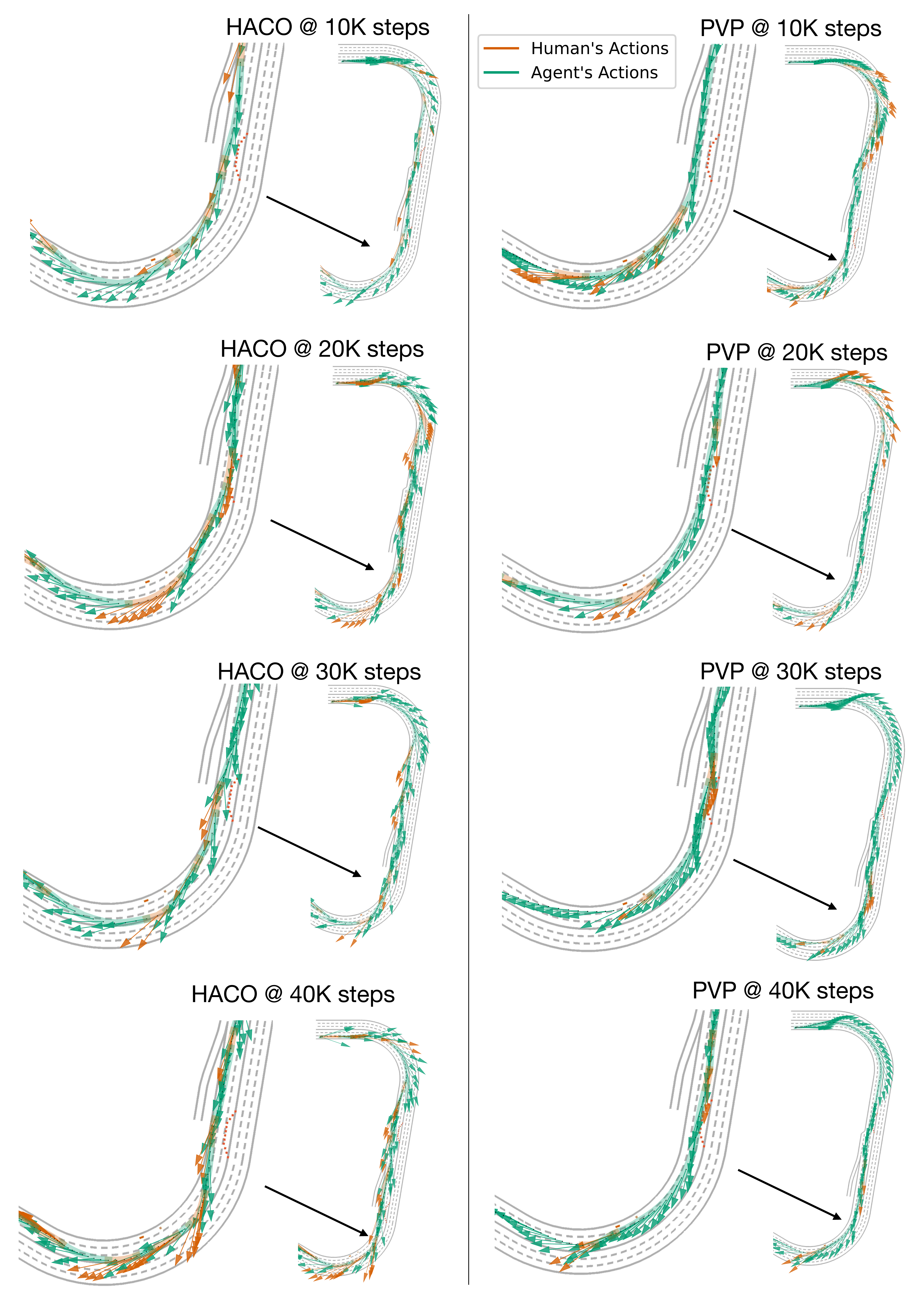}
\caption{
In MetaDrive task, we use the top-down view to plot the trajectories of human-agent shared control. We use dense arrows to represent the actions that are applied to the environments. 
The arrow starts at the position of the car at that time step and its direction is the steering angle, projected into ego car's local coordination. The length of the arrow represents the acceleration. 
We use green and yellow arrows to denote agent's actions and human's actions, respectively. 
}
\label{fig:visualization-of-shared-control}
\end{figure}

\subsection{Extra Results in MiniGrid}
\label{section:appendix-extra-results}

\begin{figure}[H]
\centering
\begin{subfigure}[b]{0.34\linewidth}
         \centering
         \includegraphics[width=0.941\textwidth]{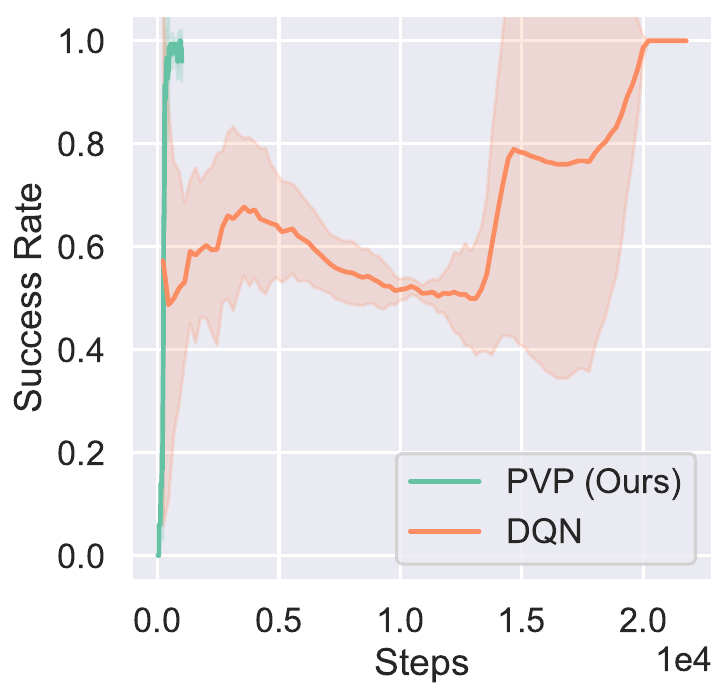}
         \caption{MiniGrid-Empty-Random-6x6-v0}
     \end{subfigure}
\begin{subfigure}[b]{0.32\linewidth}
         \centering
         \includegraphics[width=\textwidth]{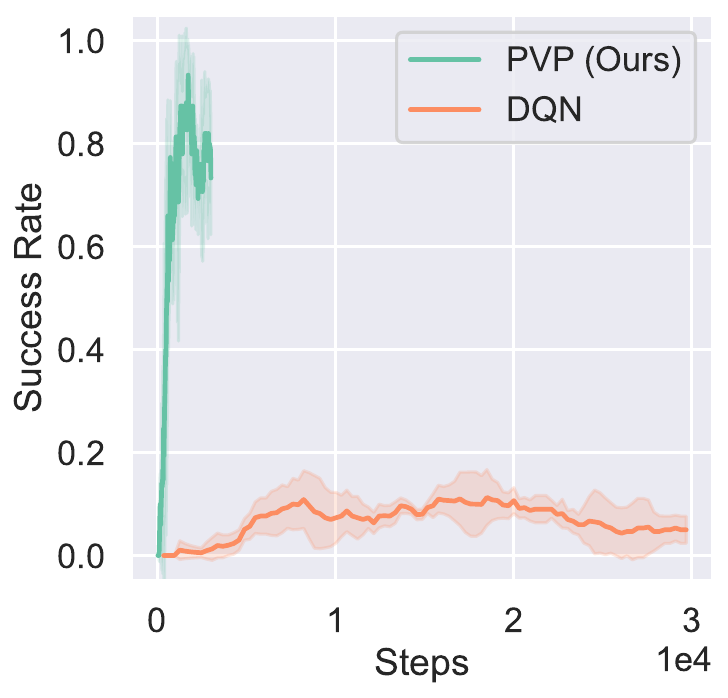}
         \caption{MiniGrid-TwoRooms-v0
}
\end{subfigure}
\begin{subfigure}[b]{0.32\linewidth}
         \centering
         \includegraphics[width=\textwidth]{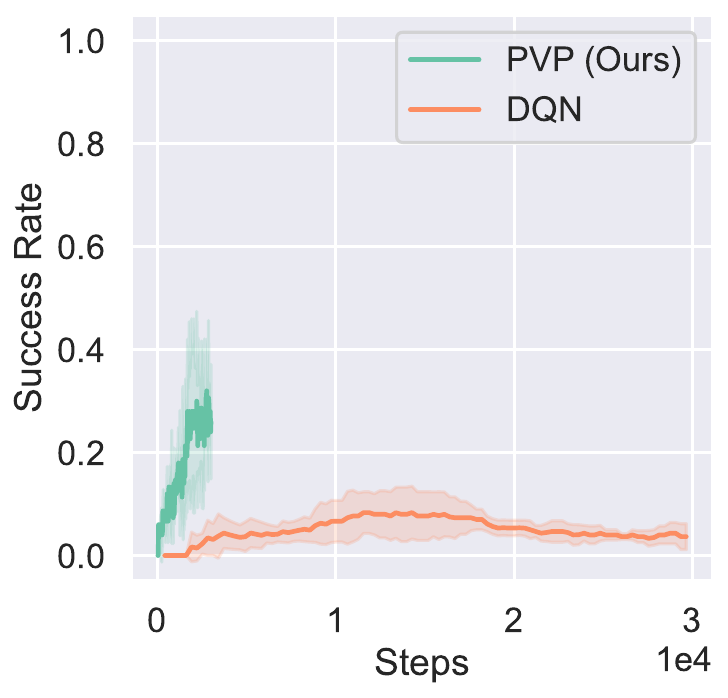}
         \caption{MiniGrid-FourRooms-v0
}
\end{subfigure}
\caption{
MiniGrid results.
}
\label{fig:minigrid-extra}
\end{figure}
In Fig.~\ref{fig:minigrid-extra}, we present the extra results in two additional MiniGrid environments. PVP achieves superior performance compared to RL baseline. Note that we use a CNN without recurrent module as the feature extractor. The performance of PVP can be further improved if we utilize the neural architecture with memory capability.

\section{Hyper-parameters}
\label{section:appendix-hyper-parameters}

In MetaDrive safety benchmark~\citep{li2021metadrive} task, the observation is a state vector. There exists a split of training and test environments in MetaDrive. We present the result of the learned agent performing in the test environment. 

In CARLA~\citep{Dosovitskiy17}, the observation is the bird-eye view image in $[84, 84, 5]$ shape, where 5 is the number of semantic channels.
We train and evaluate the agents in the same NoCrashTown01 environment.


In GTA V, the observation is a state vector containing 2D LiDAR scanning for 240 total sampling points with max range 50m~\citep{presil}, vehicle state variables (speed, throttle, steering, heading), and navigation state variables (distance to road borders, distance to the next navigation point, collision to objects). There exists a split of training and test environments in GTA V. We present the result of the learned agent performing in the test environment.

In MiniGrid tasks~\citep{gym_minigrid} MiniGrid-Empty-Random-6x6-v0 (Empty Room), MiniGrid-MultiRoom-N2-S4-v0 (Two Room) and MiniGrid-MultiRoom-N4-S5-v0 (Four Room), the observation is the top-down view semantic map in shape $[7, 7, 3]$.


In MetaDrive and GTA V, we use a MLP with two hidden layers, each has 256 units and ReLU activation, as the network architecture for the value network and policy network.

For CARLA task, since the input image has the same size of [84, 84] pixels,
we use the same 5-layers CNN architecture with [16, 32, 64, 128, 256] filters in each layer. The corresponding kernel-size is [[4, 4], [3, 3], [3, 3], [3, 3], [4, 4]], and strides [3, 2, 2, 2, 4]. We use ReLU as activation functions between each layer.

For MiniGrid tasks, we use a 3-layer CNN architecture with [16, 16, 32] filters in each layer. All three layers have kernel-size 2 and there is a max-pooling layer between the first two layers. We use ReLU as activation functions between each layer.

\begin{table}[H]
\begin{small}
\begin{minipage}{0.45\linewidth}
\centering
\caption{PVP (MetaDrive)}
\begin{tabular}{@{}ll@{}}
\toprule
Hyper-parameter             & Value  \\ \midrule
Discounted Factor $\gamma$   & 0.99  \\
$\tau$ for Target Network Update & 0.005 \\
Learning Rate               & 0.0001 \\ 
Steps before Learning Start & 100\\
Steps per Iteration & 1\\
Gradient Steps per Iteration & 1\\
Train Batch Size & 100  \\
Q Value Bound & 1 \\
\bottomrule
\end{tabular}
\end{minipage}\hfill
\begin{minipage}{0.45\linewidth}
\centering
\caption{PVP (CARLA)}
\begin{tabular}{@{}ll@{}}
\toprule
Hyper-parameter             & Value  \\ \midrule
Discounted Factor $\gamma$   & 0.99  \\
$\tau$ for Target Network Update & 0.005 \\
Learning Rate               & 0.0001 \\ 
Steps before Learning Start & 100\\
Steps per Iteration & 1\\
Gradient Steps per Iteration & 1\\
Train Batch Size & 128  \\
Q Value Bound & 1 \\
\bottomrule
\end{tabular}
\end{minipage}
\end{small}
\end{table}
\begin{table}[H]
\begin{small}
\begin{minipage}{0.45\linewidth}
\centering
\caption{PVP (GTA V)}
\begin{tabular}{@{}ll@{}}
\toprule
Hyper-parameter             & Value  \\ \midrule
Discounted Factor $\gamma$   & 0.99  \\
$\tau$ for Target Network Update & 0.005 \\
Learning Rate               & 0.0001 \\ 
Steps before Learning Start & 100\\
Steps per Iteration & 1\\
Gradient Steps per Iteration & 1\\
Train Batch Size & 100  \\
Q Value Bound & 1 \\
\bottomrule
\end{tabular}
\end{minipage}\hfill
\begin{minipage}{0.45\linewidth}
\centering
\caption{PVP (MiniGrid)}
\begin{tabular}{@{}ll@{}}
\toprule
Hyper-parameter             & Value  \\ \midrule
Discounted Factor $\gamma$   & 0.99  \\
$\tau$ for Target Network Update & 0.005 \\
Learning Rate               & 0.0001 \\ 
Steps before Learning Start & 50\\
Steps per Iteration & 1\\
Gradient Steps per Iteration & 32\\
Target Network Update Interval & 1\\
Train Batch Size & 256  \\
Q Value Bound & 1 \\
Exploration Reducing Fraction & 0 \\
Random Action Probability Initial Value & 0 \\
Random Action Probability Final Value & 0 \\
\bottomrule
\end{tabular}
\end{minipage}
\end{small}
\end{table}

\begin{table}[H]
\begin{small}
\begin{minipage}{0.45\linewidth}
\centering
\caption{HACO (MetaDrive)}
\begin{tabular}{@{}ll@{}}
\toprule
Hyper-parameter             & Value  \\ \midrule
Discounted Factor $\gamma$   & 0.99  \\
$\tau$ for Target Network Update & 0.005 \\
Learning Rate Actor             & 0.0003 \\ 
Learning Rate Critic            & 0.0003 \\ 
Learning Rate Entropy           & 0.0003 \\ 
Steps before Learning Start & 100\\
Steps per Iteration & 1\\
Gradient Steps per Iteration & 1\\
Target Network Update Interval & 1\\
Train Batch Size & 128  \\
CQL Loss Temperature & 1.0 \\
\bottomrule
\end{tabular}
\end{minipage}\hfill
\begin{minipage}{0.45\linewidth}
\centering
\caption{HACO (Carla)}
\begin{tabular}{@{}ll@{}}
\toprule
Hyper-parameter             & Value  \\ \midrule
Discounted Factor $\gamma$   & 0.99  \\
$\tau$ for Target Network Update & 0.005 \\
Learning Rate Actor             & 0.0003 \\ 
Learning Rate Critic            & 0.0003 \\ 
Learning Rate Entropy           & 0.0003 \\ 
Steps before Learning Start & 100\\
Steps per Iteration & 1\\
Gradient Steps per Iteration & 1\\
Target Network Update Interval & 1\\
Train Batch Size & 128  \\
CQL Loss Temperature & 1.0 \\
\bottomrule
\end{tabular}
\end{minipage}
\end{small}
\end{table}

\begin{table}[H]
\begin{small}
\begin{minipage}{0.45\linewidth}
\centering
\caption{TD3 (MetaDrive)}
\begin{tabular}{@{}ll@{}}
\toprule
Hyper-parameter             & Value  \\ \midrule
Discounted Factor $\gamma$   & 0.99  \\
$\tau$ for Target Network Update & 0.005 \\
Learning Rate              & 0.0001 \\ 
Steps before Learning Start & 10000\\
Steps per Iteration & 1\\
Gradient Steps per Iteration & 1\\
Train Batch Size & 100  \\
\bottomrule
\end{tabular}
\end{minipage}\hfill
\begin{minipage}{0.45\linewidth}
\centering
\caption{TD3 (Carla)}
\begin{tabular}{@{}ll@{}}
\toprule
Hyper-parameter             & Value  \\ \midrule
Discounted Factor $\gamma$   & 0.99  \\
$\tau$ for Target Network Update & 0.005 \\
Learning Rate              & 0.0001 \\ 
Steps before Learning Start & 10000\\
Steps per Iteration & 1\\
Gradient Steps per Iteration & 1\\
Train Batch Size & 100  \\
\bottomrule
\end{tabular}
\end{minipage}
\end{small}
\end{table}

\begin{table}[H]
\begin{small}
\begin{minipage}{0.45\linewidth}
\centering
\caption{TD3 (GTA V)}
\begin{tabular}{@{}ll@{}}
\toprule
Hyper-parameter             & Value  \\ \midrule
Discounted Factor $\gamma$   & 0.99  \\
$\tau$ for Target Network Update & 0.005 \\
Learning Rate              & 0.0001 \\ 
Steps before Learning Start & 10000\\
Steps per Iteration & 1\\
Gradient Steps per Iteration & 1\\
Train Batch Size & 100  \\
\bottomrule
\end{tabular}
\end{minipage}\hfill
\begin{minipage}{0.45\linewidth}
\centering
\caption{DQN (MiniGrid)}
\begin{tabular}{@{}ll@{}}
\toprule
Hyper-parameter             & Value  \\ \midrule
Discounted Factor $\gamma$   & 0.99  \\
$\tau$ for Target Network Update & 0.005 \\
Learning Rate              & 0.0001 \\ 
Steps before Learning Start & 50\\
Steps per Iteration & 1\\
Gradient Steps per Iteration & 32\\
Target Network Update Interval & 1\\
Train Batch Size & 256  \\
Exploration Reducing Fraction & 0.3 \\
Random Action Probability Initial Value & 0 \\
Random Action Probability Final Value & 0.05 \\
\bottomrule
\end{tabular}
\end{minipage}
\end{small}
\end{table}

\end{document}